\newcommand{\mbf}[1]{\mathbf{#1}}
\newcommand{\defeq}{:=}
\newcommand{\indic}[1]{\mbf{1}\left\{#1\right\}}
\theoremstyle{plain}
\newtheorem{theorem}{Theorem}
\newtheorem{lemma}[theorem]{Lemma}
\theoremstyle{definition}
\newtheorem{assumption}[theorem]{Assumption}
\theoremstyle{remark}
\definecolor{cornflowerblue}{RGB}{100, 149, 237}
\title{\Large \bf Top-$k$ Feature Importance Ranking}
\author{
Yuxi Chen\\
Carnegie Mellon University\\
\texttt{ericc3@andrew.cmu.edu}
\and
Tiffany Tang\\
University of Notre Dame\\
\texttt{ttang4@nd.edu}
\and
Genevera Allen\\
Columbia University\\
\texttt{genevera.allen@columbia.edu}
}
\date{}
\begin{document}

\maketitle

\begin{abstract}
Accurate ranking of important features is a fundamental challenge in interpretable machine learning with critical applications in scientific discovery and decision-making. Unlike feature selection and feature importance, the specific problem of ranking important features has received considerably less attention. We introduce RAMPART (Ranked Attributions with MiniPatches And Recursive Trimming), a framework that utilizes any existing feature importance measure in a novel algorithm specifically tailored for ranking the top-$k$ features. Our approach combines an adaptive sequential halving strategy that progressively focuses computational resources on promising features with an efficient ensembling technique using both observation and feature subsampling. Unlike existing methods that convert importance scores to ranks as post-processing, our framework explicitly optimizes for ranking accuracy. We provide theoretical guarantees showing that RAMPART achieves the correct top-$k$ ranking with high probability under mild conditions, and demonstrate through extensive simulation studies that RAMPART consistently outperforms popular feature importance methods, concluding with a high-dimensional genomics case study.
\end{abstract}

\section{Introduction}
\label{section: intro}

A key challenge in interpretable machine learning is determining not just which features influence model predictions but their relative importance ranking. In particular, accurately ranking the top-$k$ most important features would fundamentally change the decision-making and scientific discovery process in numerous high-stakes applications \citep{bhatt_explainable_2020, jaxa-rozen_sources_2021}. For example, in genomics, genome-wide association studies (GWAS) \citep{visscher201710} are by far the most common approach to identify important genes or genetic variants that are associated with disease risk. 
These data-driven studies often identify hundreds of important genetic variants.
However, to translate these findings into tangible therapeutic targets and clinical practice, wet-lab validation is necessary, but typically limited to a few dozen genetic variants, if not fewer due to its high cost \citep{fu_gene_2020, wang_epistasis_2023, pashaei_biomarker_2025}.
More generally, given resource constraints, the need to rank or prioritize a small number of top-ranked candidates for costly downstream decision-making is a common theme among many scientific and clinical pipelines.

Although feature importances have been extensively studied in machine learning, methods specifically designed for ranking the top-$k$ most important features remain under-developed. Current approaches for top-$k$ feature ranking typically rely on the heuristic of first estimating feature importance values for all features, sorting them, and then subsetting to the top-$k$ features with the largest importance \citep{lundberg_unified_2017, neuhof_confident_2024, goldwasser_statistical_2025}. However, the first step of this paradigm is particularly limiting as valuable data and computational resources are being used to estimate the importances of \textit{all} features, including those that are irrelevant or far outside of the top-$k$ that are of primary interest.
This issue is further exacerbated in realistic settings with correlated and high-dimensional data (e.g., in genomics), where existing feature importance estimates are known to be highly unstable and unreliable \citep{nicodemus2009predictor, nicodemus2011stability, hooker2021unrestricted}. These challenges highlight the need for a new paradigm to accurately rank the top-$k$ most important features.

\subsection{Our Contributions}

We address the challenges in ranking the top-$k$ features with the highest global feature importances and make several key contributions. First, we introduce RAMP (Ranked Attributions with MiniPatches), an efficient ensembling strategy that aggregates models trained on random subsamples (or ``minipatches'') of both observations and features. This approach breaks harmful correlation patterns among features while maintaining statistical power \citep{gan_model-agnostic_2022}. Building on RAMP, we then develop RAMPART (RAMP And Recursive Trimming), a model-agnostic framework for top-$k$ feature importance ranking applicable to any feature attribution method. RAMPART's novel recursive trimming approach progressively focuses computational resources on promising features while eliminating suboptimal ones—becoming increasingly precise in distinguishing between similarly ranked features as the candidate pool shrinks. Unlike existing approaches that allocate equal resources to all features, this adaptive strategy proves particularly effective in high-dimensional settings. Finally, we provide theoretical guarantees on recovering the correct top-$k$ feature importance ranking under mild assumptions, establishing explicit sample complexity bounds that may be of independent interest.  

\subsection{Related Works}
\label{subsection: related works}

\paragraph{Feature Importance} 
Although not directly designed for feature importance ranking, numerous model-specific and model-agnostic feature importance measures have been developed to quantify the contribution of each predictor feature on the model's predictions and performance \citep{molnar_interpretable_2022}. Popular model-specific approaches include regression coefficients for linear models, Mean Decrease in Impurity for tree-based methods \citep{breiman2001random}, and neural network attributions like DeepLift and Integrated Gradients \citep{shrikumar_learning_2019, sundararajan_axiomatic_2017}. Model-agnostic methods include occlusion-based \citep{lei_distribution-free_2014}, permutation-based \citep{breiman2001random}, and Shapley-based techniques \citep{lundberg_unified_2017, lundberg_local_2020}. In Section~\ref{section: empirical studies}, we will demonstrate the shortcomings of simply ranking these feature importances to obtain the top-$k$.

\paragraph{Ranking from Pairwise Comparisons}
On the other hand, viewing this problem from the lens of the ranking literature, many previous works have directly estimated rankings from pairwise comparisons. This literature includes tournament methods \citep{mohajer_active_2017}, spectral techniques \citep{negahban_rank_2017, chen_spectral_2015, chen_spectral_2019}, adaptive selection paradigms \citep{heckel_active_2016, heckel_approximate_2018}, and weighting strategies \citep{shah_simple_2018, wauthier_efficient_2013, ammar_efficient_2012}. Despite their theoretical appeal, these methods struggle to capture multivariate feature dependencies and face computational barriers in high dimensions, limiting their applicability to top-$k$ feature importance ranking.

\paragraph{Feature Importance Ranking}
More recently, several works have begun to focus specifically on feature importance ranking. \citet{kariyappa_shapkefficient_2023} developed sampling algorithms to identify top-$k$ features by Shapley values without addressing their ordering. \citet{teneggi_testing_2024} employed statistical independence testing with betting principles, primarily for semantic concept validation in vision rather than tabular data. \citet{neuhof_confident_2024} introduced a framework for quantifying uncertainty of feature importance rankings through simultaneous confidence intervals. Their approach focuses primarily on post-hoc interpretation of pre-computed importance scores rather than providing an efficient algorithmic framework for large-scale feature ranking. Most relevant to our work, \citet{goldwasser_statistical_2025} developed a sequential pairwise hypothesis testing framework for assessing the statistical significance of the top-$k$ most important features using resampled attribution scores. This approach, however, requires normality and independence assumptions that are rarely satisfied in practice and violated by correlated estimators. Computational overhead from repeated pairwise testing also limits scalability to high dimensions. Their subsequent rank verification method \citep{goldwasser_gaussian_2025} similarly assumes Gaussian distributions, constraining applicability to real-world data with non-Gaussian distributions and complex dependencies.

\paragraph{Best Arm Identification}
To avoid the current limitations of existing feature importance ranking approaches, we introduce a recursive trimming strategy, which draws inspiration from multi-armed bandits research on best arm identification, including UCB approaches \citep{audibert_best_2010, chen_nearly_2017}, Thompson Sampling \citep{russo_simple_2020}, and halving algorithms \citep{zhao_revisiting_2023}. Particularly relevant is research on best-\textit{k}-arm identification \citep{chen_efficient_2008, gao_note_2015, you_information-directed_2023}, with \citet{liu_variable_2023} successfully applying Thompson Sampling to variable selection. However, direct application of bandit algorithms to feature ranking faces key challenges: (1) assumed arm independence—violated by correlated features and (2) unknown distributions of importance measures. Our work addresses these limitations with a novel approach that efficiently produces statistically robust feature rankings while accounting for feature interdependencies.

\section{Adaptive Feature Importance Ranking}
\label{section: adaptive feature ranking}

\subsection{Problem Setup}

\label{subsection: problem setup}
Suppose we observe a dataset $\mathcal{D} = \{(\mathbf{x}_1, y_1)\, \dots, (\mathbf{x}_N, y_N)\}$ where $\mathbf{x}_i \in \mathbb{R}^M$ and $y_i \in \mathbb{R}$ denote the features and response respectively.  We assume observations are independent and identically distributed draws from an unknown joint distribution. We also assume that each feature $j \in [M] := \{1, \dots, M\}$ possesses an inherent global feature importance $\phi_j$. There are many existing metrics that can be used to quantify $\phi_j$: see \citet{lundberg_unified_2017,  molnar_interpretable_2022, fisher_all_2019} for instance. We note that the feature importance $\phi_j$ depends on the specific predictive model and importance metric used, with each method potentially defining a distinct ground truth. We define the rank of the $j$-th feature as $r_j \defeq \sum_{i = 1}^{M} \indic{|\phi_j| < |\phi_i|}$ and the $j$-th best feature as $\tau_j$. Additionally, for some pre-specified $k\ll M$, we assume that the top-$k$ features do not contain any ties: $r_{\tau_j} \neq r_{\tau_{j'}}$ for any $j, j' \in [k]$ where $j \neq j'$.
Our goal is to correctly estimate these top $k$ ranks such that $\hat{r}_{\tau_j} = r_{\tau_j}$ for $j \in [k]$.

\subsection{Motivation}
\label{subsection: motivation}

As mentioned previously, one naive approach to feature importance ranking is to first estimate importance scores $\{\hat{\phi}_{j}\}_{j = 1}^{M}$ and then sort the features accordingly. However, in realistic, high-dimensional settings, this approach faces several fundamental challenges, which motivate our proposed framework.

First, obtaining accurate feature importance estimates in high dimensions is problematic. These estimates often suffer from bias due to inherent correlations in high-dimensional data, with well-known metrics performing poorly in correlated settings \citep{chamma_variable_2023}. The estimates also exhibit high variance and unreliability, as even classical methods such as ordinary least squares coefficients are known to become unstable when the number of features approaches or exceeds the sample size \citep{hastie2009elements}. The computational burden compounds these issues: computing importances for a large number of features can be expensive, forcing popular approaches to resort to approximations. For instance, methods like Shapley values are computationally infeasible without substantial approximations that compromise their theoretical guarantees \citep{mitchell_sampling_2022, ghorbani_data_2019}.

Second, this approach fundamentally misaligns computational and statistical resources with our objective. Our focus is not on ranking all features but only the top-$k$, a crucial distinction that should guide algorithm design. Analogous to sparse modeling in high-dimensional statistics, we expect many features to be noise and are uninterested in learning their ranks \citep{10.5555/2834535}. However, estimating and sorting importance metrics expends valuable statistical resources (e.g., data samples) and computational effort on all features, including noise features. This misallocation is particularly problematic when distinguishing between features of similar importance levels, where more precise estimation is needed. 

To address these challenges, we develop a two-stage approach that combines efficient feature importance estimation with adaptive refinement. We first introduce RAMP (Ranked Attributions with MiniPatches), which leverages ensemble learning principles through random subsampling of both features and observations. We then extend this to RAMPART (RAMP And Recursive Trimming), which progressively focuses computational resources on the most promising features.

\subsection{Ranked Attributions with MiniPatches (RAMP)}

We begin by assuming access to a feature importance ranking procedure $\mathcal{M}$. This procedure $\mathcal{M}: \mathbb{R}^{n \times m} \times \mathbb{R}^n \mapsto \{0, \dots, m - 1\}$ takes as input a subset of the data (i.e., a ``minipatch'') $(\mathbf{X}_{I,F}, \mathbf{Y}_I)$ where $I \subseteq [N], |I| = n$ is a subsample of observations and $F \subseteq [M], |F| = m$ is a subsample of features, and returns rank estimates ${\tilde{r}_j}$ for features $j \in [F]$. In practice, this ranking procedure $\mathcal{M}$ typically involves: (i) fitting a predictive model to the minipatch, (ii) computing feature importance scores using a specific attribution method, and (iii) sorting these scores by magnitude to output ranks.

Given $\mathcal{M}$, we formalize RAMP in Algorithm \ref{algorithm: RAMP}. The procedure operates by generating numerous minipatches, with each consisting of a different random subsample of both observations and features. For each minipatch, RAMP applies $\mathcal{M}$ to obtain feature importance rank estimates $\tilde{r}_j^b$, which are then averaged across all minipatches where each feature appears. This minipatch ensembling approach reduces variance while breaking harmful correlation patterns between features \citep{gan_model-agnostic_2022}. The final step sorts the averaged ranks $\bar{r}_{j}$ using order statistics $\bar{r}_{(1)}, \dots, \bar{r}_{(M)}$ to obtain the final rankings.

\begin{algorithm}
\caption{Ranked Attributions with MiniPatches (RAMP)}
\label{algorithm: RAMP}
\begin{algorithmic}[1]
\REQUIRE Ranking procedure $\mathcal{M}$, dataset $\mathcal{D} = \{(\mbf{x}_i, y_i)\}_{i = 1}^{N} \equiv (\mathbf{X}, \mathbf{Y})$, number of minipatches $B$, data subsample size $n < N$, feature subsample size $m < M$
\ENSURE Estimated feature ranks ${\hat{r}_1, \dots, \hat{r}_M}$
\FOR{$b \in [B]$}
\STATE $I_b \gets$ randomly subsample $n$ observations $\subset [N]$
\STATE $F_b \gets$ randomly subsample $m$ features $\subset [M]$
\STATE $\tilde{r}_j^b \gets \mathcal{M}(\mathbf{X}_{I_b, F_b}, \mathbf{Y}_{I_b})_j$ for $j \in F_b$
\ENDFOR
\STATE For all $j \in [M]$, set $\; \bar{r}_j \gets \dfrac{\sum_{b \in [B]: j \in F_b} \tilde{r}_j^b}{\sum_{b \in [B]} \indic{j \in F_b}}$
\RETURN $\hat{r}_j = (i: \bar{r}_{(i)} = \bar{r}_j) - 1$
\end{algorithmic}
\end{algorithm}

Importantly, RAMP serves as a meta-algorithm that improves the accuracy of feature rankings regardless of the specific attribution method employed. More specifically, each ranking procedure $\mathcal{M}$ produces its own importance measure $\phi_j$ and hence ranks $r_j$, but ordinary estimates of these quantities typically suffer from instability in high-dimensions due to feature correlations and sampling variability \citep{chamma_statistically_2023, kelodjou_shaping_2024}. Given the choice of ranking procedure $\mathcal{M}$, RAMP improves the estimation of the corresponding ranks $r_j$ by ensembling across diverse minipatches, effectively reducing variance while maintaining the statistical properties of the underlying importance measures \citep{gan_model-agnostic_2022, yao_feature_2020}. This approach provides more stable and accurate approximations of the true feature importance ranking, while maintaining the flexibility to accommodate any feature importance procedure.

\subsection{RAMP And Recursive Trimming (RAMPART)}
\label{subsection: RAMPART}
Though RAMP provides an improved foundation for feature ranking, the uniform treatment of all features within RAMP is not ideal when our primary interest is in the top-$k$ ranked features. To address these limitations, we develop RAMPART (Ranked Attributions with MiniPatches And Recursive Trimming), an adaptive framework that builds upon the minipatch sampling from RAMP while incorporating ideas from the sequential halving literature. Our approach is inspired by the pioneering work of \citet{karnin_almost_2013} on successive halving algorithms, as well as recent advances in batched sequential halving \citep{jun_top_2016, koyamada_batch_2024} for the fixed batch setting. While these methods were originally developed for best-arm identification in multi-armed bandits, we adapt their core insight of progressive resource allocation to the feature ranking context.

RAMPART operates by iteratively applying RAMP to an increasingly focused set of features. In each round, it identifies and retains the more promising half of the features while eliminating those less likely to be in the top-\textit{k} set. This adaptive strategy, formalized in Algorithm \ref{algorithm: RAMPART}, recursively trims the feature pool size, enabling more accurate rank estimation in later rounds where fine-grained distinctions become crucial. The number of iterations is carefully chosen to ensure the final feature pool size aligns with our target \textit{k}. By concentrating computational resources on the most relevant features, RAMPART more efficiently spends its resources, distinguishing between similarly-ranked important features. This adaptive resource allocation is a critical advantage over traditional approaches that uniformly evaluate all features since features that survive to later rounds in RAMPART are evaluated more frequently, enabling increasingly precise rank estimates where they matter most. 

\begin{algorithm}
\caption{Ranked Attributions with MiniPatches And Recursive Trimming (RAMPART)}
\label{algorithm: RAMPART}
\begin{algorithmic}[1]
\REQUIRE Ranking procedure $\mathcal{M}$, dataset $\mathcal{D} = \{(\mathbf{x}_i, y_i)\}_{i = 1}^{N}$, number of minipatches $B$, data subsample size $n < N$, feature subsample size $m < M$,  top features $k$
\ENSURE Estimated feature ranks $\hat{r}_1, \ldots, \hat{r}_M$
\STATE $T \gets \lfloor \log_2 M \rfloor - \lceil \log_2 k \rceil + 1$
\STATE $\mathcal{C}_1 \gets [M]$
\FOR{$t \in \{1, \dots, T\}$}
\STATE $\hat{r}^t_1, \dots, \hat{r}^t_{|\mathcal{C}_t|} \gets$ RAMP$_{B, n, m}(\mathcal{C}_t$) 
\STATE $\mathcal{C}_{t + 1} \gets \{\hat{\tau}^t_1, \ldots, \hat{\tau}^t_{|\mathcal{C}_t|/2}\}$
\ENDFOR
\RETURN $\hat{r}_j = \hat{r}_j^T$ if $j \in \mathcal{C}_T$ otherwise $\hat{r}_j = k$
\end{algorithmic}
\end{algorithm}

\section{Theoretical Analysis}
\label{section: theoretical analysis}

In this section, we show theoretical guarantees for RAMP and RAMPART. We also prove that RAMPART achieves performance superior to that of RAMP under mild assumptions on the properties of the ranking procedure $\mathcal{M}$. 

\begin{assumption}\textbf{Unique Top-\textit{k} Ranks.}
\label{assumption: no ties}
For any two features $j, j' \in [M]$ where at least one feature has true rank smaller than $k$, either $r_j > r_{j'}$ or $r_{j'} > r_{j}$.
\end{assumption}

While this assumption might appear restrictive at first glance, it only requires distinct ranks among the top-\textit{k} features of interest. Ties are permitted among features outside this set. This reflects real-world scenarios where we need to distinguish among the most important features but can tolerate ambiguity in the ordering of less relevant or null features. 

\begin{assumption}
\label{assumption: consistency}
    \textbf{Rank Consistency.} For any two features $j, j' \in \mathcal{S} \subseteq [M], |\mathcal{S}| \ge m$ with $r_j < r_{j'}$ that are sampled in the same minipatch,
    \[\mathbf{P}\big(\tilde{r}_j < \tilde{r}_{j'}\big| j, j' \in F\big) \ge p > \dfrac{1}{2}\]
    where the probability is taken over all minipatches $F$ of size $m$ in $\mathcal{S}$ such that $j$ and $j'$ are sampled together.
\end{assumption}

This consistency assumption requires that our ranking procedure $\mathcal{M}$ performs better than random guessing when comparing features within the same minipatch on average. If a model cannot reliably order features when evaluated together, it cannot be expected to produce meaningful relative rankings. In particular, we only require probabilistic consistency, allowing for errors in individual comparisons.

\begin{assumption}
\label{assumption: unbiased}
    \textbf{Unbiased Ordering.} For any two features $j, j' \in \mathcal{S} \subseteq [M], |\mathcal{S}|\ge m$ with $r_j < r_j'$,
    \[\mathbf{E}\big[\tilde{r}_j \big| j \in F, j' \notin F\big]  < \mathbf{E}\big[\tilde{r}_{j'} \big| j' \in F, j \notin F\big]\]
    where expectations are taken over minipatches in $\mathcal{S}$ that sample one feature but not the other.
\end{assumption}

This assumption says that rank comparisons should remain informative across different minipatches. It requires that features of higher importance tend to receive better ranks relative to less important features, even when they appear in separate samples rather than being directly compared. This property is especially relevant for our minipatch approach, since we aggregate rank estimates across many different subsamples where not all pairs of features appear together.

\begin{assumption}
\label{assumption: bounded deviation}
\textbf{Bounded Deviation.} There exists a universal constant $C > 0$ such that for any two features $j, j' \in \mathcal{S} \subseteq [M], |\mathcal{S}|\ge m$ with $r_j < r_{j'}$ that are sampled in the same minipatch, 
\[\mathbf{E}[\tilde{r}_{j'} - \tilde{r}_j | \tilde{r}_{j'} > \tilde{r}_j] - \mathbf{E}[\tilde{r}_j - \tilde{r}_{j'} | \tilde{r}_j > \tilde{r}_{j'}] > C\]
where expectations are taken over all minipatches $F$ of size $m$ in $\mathcal{S}$ such that $j$ and $j'$ are sampled together.
\end{assumption}

This final assumption ensures that correctly ordered features are separated by a larger margin than incorrectly ordered ones. Specifically, when a more important feature is ranked above a less important one, their expected rank difference exceeds the expected difference when incorrectly ordered by at least some small positive constant $C$. This property provides stability in our estimates where ranking errors have less impact than correct orderings, allowing us to recover true feature ordering through ensembling. Together, these assumptions enable theoretical guarantees for our algorithms.

\begin{theorem}
\label{theorem: RAMP}
    Under Assumptions \ref{assumption: no ties}-\ref{assumption: bounded deviation},
    if the number of minipatches satisfies
    \[B_{\text{RAMP}} = \mathcal{O}\big(\dfrac{M^3}{m}\ln(\dfrac{kM}{\delta})\big),\]
    then with probability at least $1 - \delta$, RAMP will correctly rank all top-$k$ features: $\hat{r}_{\tau_j} = r_{\tau_j}$ for $j \in \{1, \dots, k\}$.
\end{theorem}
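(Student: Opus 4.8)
The plan is to reduce correct top-$k$ recovery to a family of pairwise comparisons between the averaged ranks $\bar r_j$, show that the corresponding population means are separated by a gap of order $m/M$, and then close the argument with Hoeffding's inequality and a union bound. \textbf{Reduction to pairwise comparisons.} I would first note that the event $\{\hat r_{\tau_a} = r_{\tau_a} \text{ for all } a \in [k]\}$ is implied by the event $\mathcal{E}$ that $\bar r_{\tau_a} < \bar r_{j'}$ for every $a \in [k]$ and every feature $j'$ with $r_{j'} > r_{\tau_a}$. Indeed, under Assumption~\ref{assumption: no ties} the top-$k$ features have distinct ranks $r_{\tau_1} < \cdots < r_{\tau_k}$, so on $\mathcal{E}$ exactly the features $\tau_1, \dots, \tau_{a-1}$ have averaged rank below $\bar r_{\tau_a}$, forcing $\hat r_{\tau_a} = r_{\tau_a}$. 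Since there are at most $k(M-1) = \mathcal{O}(kM)$ such ordered pairs, it suffices to bound the probability of a single reversal $\bar r_j \ge \bar r_{j'}$ (for a fixed pair with $r_j < r_{j'}$) by $\delta/(kM)$ and then union bound.

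\textbf{Separation of the means.} Write $\mu_j := \mathbf{E}[\tilde r_j \mid j \in F]$, which is identical across minipatches because they are drawn i.i.d. Conditioning on whether both features are co-sampled (event $A$, with $\mathbf{P}(A \mid j \in F) = q := \tfrac{m-1}{M-1}$) or only one is sampled, I would decompose
\[
\mu_{j'} - \mu_j = q\,\mathbf{E}[\tilde r_{j'} - \tilde r_j \mid A] + (1-q)\big(\mathbf{E}[\tilde r_{j'} \mid j' \in F, j \notin F] - \mathbf{E}[\tilde r_j \mid j \in F, j' \notin F]\big).
\]
The second term is nonnegative by Assumption~\ref{assumption: unbiased}. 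For the co-sampled term, splitting over the sign of $\tilde r_{j'} - \tilde r_j$ and writing $p = \mathbf{P}(\tilde r_j < \tilde r_{j'} \mid A)$, $D_+ = \mathbf{E}[\tilde r_{j'}-\tilde r_j \mid \tilde r_{j'} > \tilde r_j]$, and $D_- = \mathbf{E}[\tilde r_j - \tilde r_{j'}\mid \tilde r_j > \tilde r_{j'}]$ gives $\mathbf{E}[\tilde r_{j'}-\tilde r_j \mid A] = p D_+ - (1-p) D_-$. Since $p > \tfrac12$ by Assumption~\ref{assumption: consistency} and $D_+ - D_- > C$ by Assumption~\ref{assumption: bounded deviation}, one checks $p D_+ - (1-p)D_- - \tfrac12(D_+ - D_-) = (p-\tfrac12)(D_+ + D_-) > 0$, so this quantity exceeds $\tfrac12(D_+ - D_-) > C/2$. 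Hence the mean gap satisfies $\Delta := \mu_{j'} - \mu_j \ge \tfrac{m-1}{M-1}\cdot \tfrac{C}{2} = \Omega(mC/M)$.

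\textbf{Concentration and union bound.} Each $\bar r_j$ is an average of i.i.d.\ terms bounded in $[0, m-1]$ over the random count $B_j = \sum_b \indic{j \in F_b}$, which has mean $Bm/M$ and, by a Chernoff bound, satisfies $B_j \ge Bm/(2M)$ with high probability. A reversal forces $|\bar r_j - \mu_j| \ge \Delta/2$ or $|\bar r_{j'} - \mu_{j'}| \ge \Delta/2$; conditioning on the realized index sets, Hoeffding's inequality bounds each such deviation by $2\exp(-c\,B_j \Delta^2/m^2)$. Substituting $B_j \asymp Bm/M$ and $\Delta \asymp mC/M$, the exponent is of order $B m C^2 / M^3$, so taking $B = \mathcal{O}\big(\tfrac{M^3}{m}\ln(\tfrac{kM}{\delta})\big)$ makes each reversal probability at most $\delta/(kM)$. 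Union bounding over the $\mathcal{O}(kM)$ pairs (and absorbing the Chernoff events) yields $\mathbf{P}(\mathcal{E}) \ge 1 - \delta$, completing the proof.

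\textbf{Main obstacle.} I expect the crux to be the mean-separation step: only the co-sampling contribution admits a \emph{quantitative} lower bound (namely $C/2$, obtained by combining $p > \tfrac12$ from Assumption~\ref{assumption: consistency} with the margin $C$ from Assumption~\ref{assumption: bounded deviation}), whereas Assumption~\ref{assumption: unbiased} only guarantees the separate-minipatch term is nonnegative. This is precisely what forces the gap to scale like $m/M$ and hence produces the $M^3/m$ dependence in $B$. The remaining technical care lies in handling the randomness of the counts $B_j$ and the dependence between $\bar r_j$ and $\bar r_{j'}$ (which share co-sampled minipatches), though the latter is harmless since the reversal event is split into two separate one-sided deviations.
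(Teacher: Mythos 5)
Your proposal is correct and takes essentially the same route as the paper: the same reduction to $\mathcal{O}(kM)$ pairwise comparisons with a union bound, the same mean-separation argument via the co-sampling probability $\tfrac{m-1}{M-1}$ (this is the paper's Lemma~\ref{lemma: separability}, which extracts the constant $2p-1$ where you extract $C/2$ --- both valid, both $\Omega(m/M)$), a Chernoff bound on the random minipatch counts, and Hoeffding-type concentration (the paper phrases it as Azuma--Hoeffding for a martingale difference sequence, which is equivalent here since minipatches are i.i.d.). The remaining differences are cosmetic, so your argument would yield the same $B = \mathcal{O}\big(\tfrac{M^3}{m}\ln(\tfrac{kM}{\delta})\big)$ bound.
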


We defer the proof to Appendix \ref{appendix: proof of theorem RAMP}. Building on these results, we now show that RAMPART achieves stronger performance guarantees while requiring the same order of computational complexity.

\begin{theorem}
\label{theorem: RAMPART}
    Suppose $T \ge 3$. Then under assumptions \ref{assumption: no ties}-\ref{assumption: bounded deviation},
    there exist choices of $\{B_t\}_{t=1}^T$ such that RAMPART correctly identifies the top-$k$ features with probability at least $1 - \delta / 2 - \delta / T$ using the same order of total minipatches as RAMP: $\sum_{t = 1}^{T}B_t \sim B_{\text{RAMP}}$.
\end{theorem}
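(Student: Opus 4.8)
The plan is to analyze RAMPART round by round, treating each round's call to RAMP as an instance of Theorem~\ref{theorem: RAMP} applied to the current candidate set, and then to chain the rounds together with a conditional union bound. Write $M_t \defeq |\mathcal{C}_t|$ and note that the halving schedule gives $M_t = M/2^{t-1}$ up to rounding, so the choice $T = \lfloor \log_2 M\rfloor - \lceil \log_2 k\rceil + 1$ yields $M_T = \Theta(k)$ and $M_t \ge 2k$ for every $t \le T-1$. The first key observation is that the relative order of any two surviving features is preserved under restriction to $\mathcal{C}_t$, so the top-$k$ features of the full problem coincide with the top-$k$ features of the sub-problem on $\mathcal{C}_t$ whenever $\mathcal{C}_t$ contains them; moreover, Assumptions~\ref{assumption: consistency}--\ref{assumption: bounded deviation} are stated for an arbitrary $\mathcal{S}$ with $|\mathcal{S}| \ge m$ and therefore transfer verbatim to each realization of $\mathcal{C}_t$. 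I will treat the late-round degenerate case $M_t < m$ separately (or simply assume $m \le M_T = \Theta(k)$), since there every minipatch contains all of $\mathcal{C}_t$ and co-sampling is certain, making the ranking task strictly easier.

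Next I would define, for each round, the event $E_t$ that the call $\mathrm{RAMP}_{B_t,n,m}(\mathcal{C}_t)$ correctly ranks the top-$k$ features within $\mathcal{C}_t$. On the event $\bigcap_{s<t} E_s$ the candidate set $\mathcal{C}_t$ deterministically contains all top-$k$ features, since correct ranking in round $s$ places them among the $k \le M_s/2$ retained features; hence, conditioning on any such realization of $\mathcal{C}_t$, Theorem~\ref{theorem: RAMP} applies to the sub-problem and yields $\mathbf{P}(E_t^c \mid \bigcap_{s<t}E_s) \le \delta_t$ provided $B_t = \mathcal{O}\!\big(\tfrac{M_t^3}{m}\ln(\tfrac{kM_t}{\delta_t})\big)$. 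Here I use that the round-$t$ minipatch randomness is independent of $\mathcal{C}_t$, which is measurable with respect to rounds $1,\dots,t-1$. A chain bound then gives $\mathbf{P}(\bigcap_t E_t) \ge 1 - \sum_t \delta_t$, and on $\bigcap_t E_t$ the final round ranks the top-$k$ correctly, i.e. $\hat r_{\tau_j} = r_{\tau_j}$ for $j \in [k]$.

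It then remains to choose the confidence split and verify the budget sum. I would set $\delta_T = \delta/2$ for the final, fine-ranking round and allocate $\delta_1,\dots,\delta_{T-1}$ summing to $\delta/T$ (e.g. uniformly), which yields total failure probability $\delta/2 + \delta/T$ as claimed. Since each $\delta_t^{-1}$ enters only logarithmically and $T = \mathcal{O}(\log M)$, every log factor is $\mathcal{O}(\ln(kM/\delta))$, so the dominant dependence is $\sum_{t=1}^T B_t = \mathcal{O}\!\big(\tfrac{1}{m}\ln(\tfrac{kM}{\delta}) \sum_{t} M_t^3\big)$. The geometric collapse $\sum_t M_t^3 = M^3\sum_t 8^{-(t-1)} \le \tfrac{8}{7}M^3$ shows the sum is dominated by the first round, so $\sum_{t=1}^T B_t = \Theta(B_{\text{RAMP}})$, giving $\sum_t B_t \sim B_{\text{RAMP}}$.

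I expect the main obstacle to be the adaptive dependence across rounds: because $\mathcal{C}_{t+1}$ is a random function of the round-$t$ outcome, Theorem~\ref{theorem: RAMP} cannot be applied to a fixed feature set, and the argument must be organized as a conditional union bound in which each round's guarantee holds uniformly over all top-$k$-containing realizations of the candidate set. The cleanest way to secure this is to prove the per-round guarantee conditionally on $\mathcal{C}_t$ and exploit the independence of the fresh minipatch draws, but care is needed to confirm that the assumptions hold for every such realization and to handle the boundary rounds where $M_t$ approaches $m$ or $k$. A secondary point requiring attention is the exact rounding in the halving schedule, which must be checked to guarantee $M_t \ge 2k$ for all intermediate rounds so that no top-$k$ feature is ever trimmed.
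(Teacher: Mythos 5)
Your proposal is correct and follows essentially the same route as the paper's proof: a round-by-round application of the RAMP concentration analysis to the shrinking pool $\mathcal{C}_t$, a union bound across rounds yielding the $\delta/2 + \delta/T$ split, and the geometric collapse $\sum_t M_t^3 = \mathcal{O}(M^3)$ (which is exactly the content of the paper's Lemma~\ref{lemma: gap}) showing the first round dominates the total budget, so that $\sum_t B_t \sim B_{\text{RAMP}}$. The only notable differences are that the paper uses a weaker per-round event for $t \le T-1$ (top-$k$ features need only outrank the discarded bottom half, rather than be fully ranked within $\mathcal{C}_t$), it allocates $\delta/(2T)$ to each intermediate round and $\delta/T$ to the final round (the mirror image of your split, with the same total), and it is less explicit than you are about conditioning on the adaptively chosen $\mathcal{C}_t$ and about the boundary cases $M_t$ near $m$ or $k$ --- your handling of those points is, if anything, more careful than the paper's.
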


We defer the proof to Appendix \ref{appendix: proof of theorem RAMPART}. While both algorithms require the same order of total minipatches, RAMPART achieves the correct ranking with higher probability. Furthermore, our assumptions are substantially weaker compared to existing work. Traditional multi-armed bandit approaches require rewards to be independent \citep{russo_simple_2020}, while recent works on variable selection have imposed much stronger identifiability assumptions demanding each arm's optimal reward distribution be uniformly separated from all other possibilities \citep{liu_variable_2023}. In contrast, our assumptions only require probabilistic consistency and unbiased ordering, allowing for substantial noise and correlation between features.

\section{Empirical Studies}
\label{section: empirical studies}

In this section, we demonstrate the empirical performance of RAMP and RAMPART through a series of carefully designed experiments and a real-data case study. 

\subsection{Comparative Simulation Studies}
\label{subsection: comparative}

We design a comprehensive simulation framework to evaluate feature ranking methods across numerous settings. We first generate data from multivariate normal distributions $\mathbf{x}_i \in \mathbb{R}^M$ under two covariance structures: identity ($\Sigma = I$) or autoregressive ($\Sigma_{i,j} = \rho^{|i-j|}$ with $\rho = 0.5$). For both covariance structures, we assign non-zero coefficients $\beta_i = \gamma(10 - i + 1)$ to the first ten features ($i = 1, \ldots, 10$), with all others set to zero. The parameter $\gamma \in \{0.03, 0.05, 0.10, 0.20, 0.50\}$, referred to as signal-to-noise ratio (SNR) in our subsequent analyses, controls the separation between coefficient values, with smaller values creating more challenging ranking problems. 

We then construct four distinct  scenarios: linear regression, nonlinear additive regression, linear classification, and nonlinear additive classification. For regression scenarios, we generate responses as $y_i = f(\mathbf{x}_i) + \epsilon_i$ with $\epsilon_i \sim \mathcal{N}(0,1)$; for classification, $y_i \sim \text{Ber}(1/(1+e^{-f(\mathbf{x}_i)}))$. The function $f$ distinguishes linear scenarios, where $f(\mathbf{x}_i) = \mathbf{x}_i^T\beta$, from nonlinear additive scenarios, where $f(\mathbf{x}_i) = \sum_{j=1}^{M} \beta_j g_j(\mathbf{x}_{i,j})$, with $g_j(x) = \cos^{j+1}(x)$ for $j \in \{1,...,5\}$ and $g_j(x) = \sin^{j-4}(x)$ for $j \in \{6,...,10\}$. All features $X_j$ in the linear scenarios and $g_j(X_j)$ in the nonlinear scenarios are standardized to zero mean and unit variance, ensuring that coefficient magnitudes $|\beta_m|$ directly reflect the contribution of the features and function as the natural ground truth importance measures.

In each scenario, we employ task-appropriate predictions models --- namely, OLS and logistic regression for the linear regression and classification scenarios, respectively, and random forests (100 trees) for the nonlinear regression and classification tasks. Additionally, we employ neural networks across all scenarios, configured as regressors for regression tasks and classifiers (with final sigmoidal activation) for classification tasks. All neural networks have a consistent two-layer architecture with $M$ hidden units and ReLU activation trained to convergence.

As comparison methods, we first include a baseline (model-specific) feature importance (FI) method for each prediction model: absolute coefficients for OLS and logistic regression, Mean Decrease in Impurity (MDI) for random forests, and Integrated Gradients for neural networks \citep{sundararajan_axiomatic_2017}. We also evaluate two popular model-agnostic approaches. First, we apply SHAP with architecture-specific variants (LinearSHAP, TreeSHAP, or GradientSHAP) \citep{lundberg_unified_2017, lundberg_local_2020}, computing global scores by averaging local attribution scores across observations. Second, we assess permutation importance by measuring the average change in prediction error over 100 random permutations on a held-out set of size $N/2$. For all methods, we obtain feature rankings by sorting importance scores by magnitude.

To ensure a fair and direct comparison, we implement RAMP and RAMPART using the same baseline (model-specific) feature importance method, described previously, for each prediction model: absolute coefficients for OLS and logistic regression, MDI for decision trees, and Integrated Gradients for small neural networks (two-layer neural network with $5m$ hidden units and ReLU activation trained for 5 epochs on each minipatch). This implementation ensures direct comparability, as performance differences stem solely from our algorithmic framework rather than from variations in feature importance attributions. Both RAMP and RAMPART use minipatches with $n = 125$ observations and $m = 10$ features. For our experiments with $M = 500$ dimensions and $k = 10$ target features, RAMPART requires 6 halving iterations with 2000 minipatches per iteration, while RAMP uses 10000 total minipatches, maintaining comparable computational budgets. Note we omit \citet{goldwasser_statistical_2025} as their approach requires repeatedly resampling Shapley estimates for top features when statistical tests fail, making it prohibitively expensive in high dimensions.

We evaluate ranking performance using Rank-Biased Overlap (RBO) with $\rho = 0.7$ \citep{webber_similarity_2010}, which naturally prioritizes accuracy at higher ranks through geometrically decreasing weights, making it appropriate for our top-$k$ ranking task: \[\text{RBO}_{\rho}:= (1 - \rho) \cdot \sum_{s = 1}^k \rho^{s - 1} \dfrac{|\{\hat{\tau}_i\}_{i = 1}^s \cap \{\tau_i\}_{i = 1}^s|}{s}\]
For each experimental setting, we conduct 100 simulations with random seeds fixed across all methods, averaging the resulting RBO scores and reporting standard error bars in our plots. Figure \ref{fig: Combined_500} demonstrates that RAMPART consistently yields the most accurate feature rankings across both regression and classification tasks, highlighting the benefits of adaptive resource allocation. RAMPART's advantage becomes particularly pronounced at higher signal-to-noise ratios (SNR $\ge 0.1$) and under autoregressive covariance, showcasing robust performance even with correlated features. While performance naturally decreases in nonlinear additive settings for all methods, RAMPART maintains its relative advantage. These results demonstrate that RAMPART's adaptive allocation strategy provides substantial practical benefits across diverse modeling scenarios. Additional results for higher dimensions ($M = 1000$ and $M = 2000$) with fixed sample size ($n = 250$) are provided in Appendix \ref{appendix: additional simulation results}, further showcasing RAMPART's robustness in increasingly challenging high-dimensional settings.

To further demonstrate how accurate feature rankings translate to improved predictive performance, we also conducted ablation studies using the same simulation setup as in our identity covariance classification experiments from Figure \ref{fig: Combined_500}. Here, using the same predictive models and configurations as before, we selected two representative signal-to-noise ratios (SNR = $0.06$ and SNR = $0.5$), split the data into a 70/30 train-test split, and assessed the model's test prediction performance as the top-ranked features are progressively added in the model in order of their estimated importance rankings. In Figure \ref{fig: ablation}, we illustrate how classification error decreases as the top-ranked features are progressively added as predictors in the model. This comparison reveals the practical impact of feature importance ranking accuracy on prediction performance, with RAMPART's superior rankings consistently yielding the lowest classification errors from models trained using only the top-ranked features. Additional simulation results for ablation studies with higher dimensions can be found in Appendix \ref{appendix: additional simulation results}, which further verify RAMPART's robustness in increasingly challenging high-dimensional scenarios.

\begin{figure} 
    \centering
    \includegraphics[width=\textwidth]{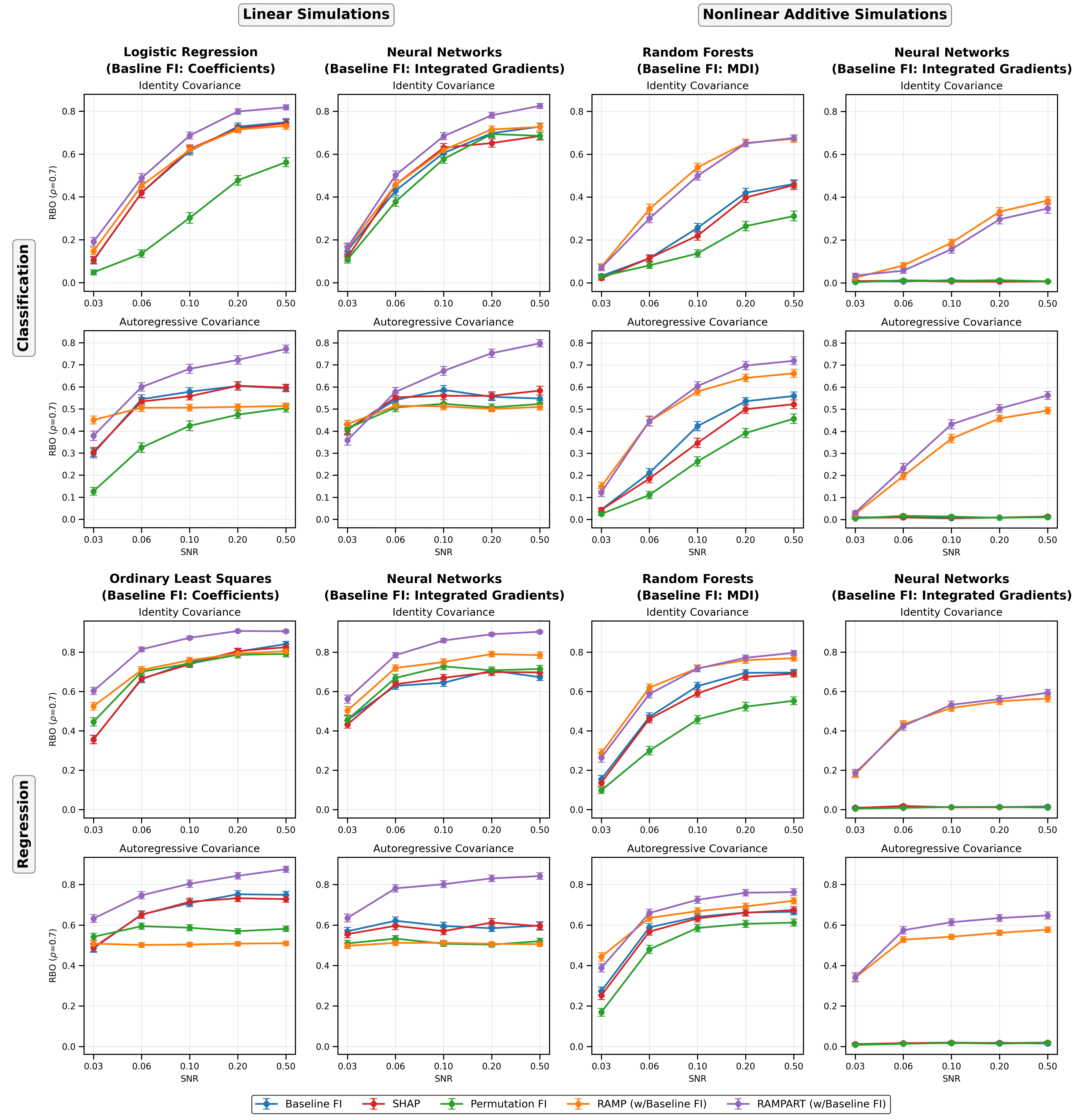}
    \caption{Feature importance ranking accuracy for classification (top) and regression (bottom)  ($M = 500$)} 
    \label{fig: Combined_500}
\end{figure}

\begin{figure}
    \centering
    \includegraphics[width=\textwidth]{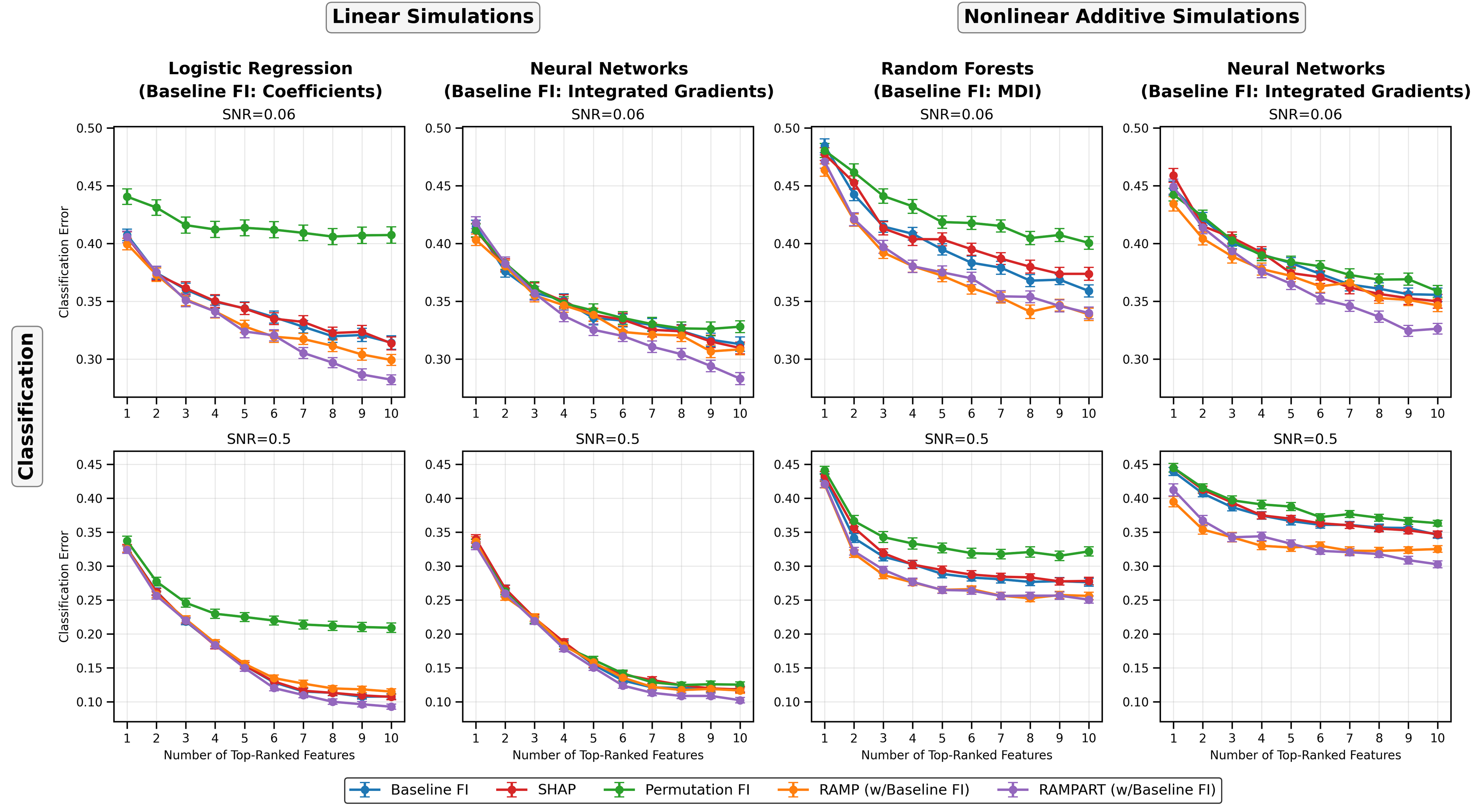}
    \caption{Classification error vs. number of top-ranked features used as predictors in ablation simulation with identity covariance $(M = 500)$ }
    \label{fig: ablation}
\end{figure}

\subsection{Cancer Genomics Case Study}

We lastly demonstrate RAMPART's effectiveness through a cancer genomics case study on drug response modeling. The goal is to develop predictive models of therapeutic cancer drug efficacy based on patients' genomic profiles. We focus specifically on ranking the most important genes driving drug response, providing insights into biological mechanisms of drug action and informing future therapeutic developments.

For our analysis, we predict response to PD-0325901 (an MEK inhibitor) across $N = 259$ human cancer cell lines using their RNASeq gene expression profiles ($M = 1104$ genes) from the Cancer Cell Line Encyclopedia (CCLE) \citep{barretina2012cancer} (see Appendix~\ref{appendix: CCLE} for details). As before, we apply multiple feature ranking methods to identify the top 10 response-driving genes. Our baseline approach uses a random forest regressor (200 trees) with Mean Decrease in Impurity (MDI). We also compute TreeSHAP values on the same random forest model and evaluate permutation importance by measuring prediction error changes over 100 random permutations on a 50/50 train-test split. For RAMP and RAMPART, we use MDI with regression trees as the minipatch ranking procedure $\mathcal{M}$, with minipatch parameters $m = 10$ and $n = 100$. RAMPART uses 4000 minipatches per iteration, while RAMP used 20000 total minipatches. Table~\ref{table: top 10 genes} presents the resulting gene rankings from all methods.

Since real-world genomic studies lack ground-truth feature importance rankings, we validate our top-10 gene findings using established biological knowledge. This biological validation approach follows established precedent in the feature importance literature \citep[e.g., see][]{lundberg_local_2020, NEURIPS2020_c7bf0b7c} where feature importance on genomic data are validated through connections to prior medical research. Along these lines, there are several key biological findings from this case study. First, all 10 genes identified by RAMPART have been previously implicated in biological pathways involving PD-0325901 and related cancers (Table~\ref{table: case study}). 
In particular, the top-ranked gene from RAMPART, \textit{TOR4A}, is a known oncogene for glioma \citep{wang2022delta}, a cancer for which PD-0325901 is currently being tested in clinical trials \citep{vinitsky2022lgg}. 
Second, the main pathway affected by PD-0325901 
is the MEK/ERK signaling pathway, which regulates cell proliferation, differentiation, and survival. Several of the genes identified by RAMPART, including \textit{ETV4}, \textit{SPRY2}, and \textit{WNT5A}, are key factors in the MEK/ERK pathway \citep{oh2012etv1, milillo2015spry2, hasan2021wnt5a}.

Finally, we perform a gene ontology (GO) enrichment analysis \citep{ashburner2000gene, gene2023gene}, a standard bioinformatics approach for identifying the biological processes that are enriched (or over-represented) in a set of genes, to further validate the biological relevance of the top-10 genes identified by RAMPART. According to the GO enrichment analysis, the biological processes, ``regulation of transmembrane receptor protein serine/threonine kinase signaling pathway'' and ``regulation of cellular response to growth factor stimulus,'' are significantly enriched (FDR $p < 0.05$) in the top-10 genes from RAMPART. Notably, the serine/threonine kinase signaling pathway plays an essential role in the activation of MEK \citep{zheng1994activation}. In contrast, no GO biological processes are significantly enriched in the top-10 genes from any of the other competing methods. Additionally, of the four genes (i.e., \textit{SPRY2}, \textit{FERMT1}, \textit{WNT5A}, and \textit{NRROS}) linked to the identified GO biological processes, two are uniquely identified by RAMPART and have not been ranked in the top 10 by any other method. This GO enrichment analysis not only supports the biological relevance of RAMPART's identified genes, but also highlights the advantage of RAMPART over existing methods for feature importance ranking and high-impact scientific discovery.

\begin{table}[t]
\begin{center}
\begin{tabular}[t]{p{1cm}p{1.5cm}p{12.5cm}}
\toprule
\textbf{Rank} & \textbf{Gene} & \textbf{Connection to PD-0325901} \\
\midrule
1 & \textit{TOR4A} & Torsin family gene and oncogene for glioma and other cancers \citep{wang2022delta} \\
\cellcolor{gray!10}{2} & \cellcolor{gray!10}{\textit{ETV4}} & \cellcolor{gray!10}{Well-known ETS transcription factor regulated by MEK/ERK pathway \citep{oh2012etv1}}\\
3 & \textcolor{cornflowerblue}{\textit{SPRY2}} & Key inhibitor of the MEK/ERK pathway \citep{zheng1994activation, milillo2015spry2}\\
\cellcolor{gray!10}{4} & \cellcolor{gray!10}{\textit{GJB1}} & \cellcolor{gray!10}{Gap junction gene associated with increased tumor progression for various cancers \citep{aasen2016gap}} \\
5 & \textit{PYCARD} & Encodes key adaptor protein in inflammatory and apoptotic signaling pathways, playing dual roles in multiple cancers \citep{protti2020dual}\\
\cellcolor{gray!10}{6} & \cellcolor{gray!10}{\textcolor{cornflowerblue}{\textit{WNT5A}}} & \cellcolor{gray!10}{Wnt signaling pathway gene which can enhance MEK/ERK pathway \citep{zheng1994activation, hasan2021wnt5a}}\\
7 & \textcolor{cornflowerblue}{\textit{FERMT1}} & Regulates Ser/Thr kinase signaling pathway, activating MEK \citep{zheng1994activation}\\
\cellcolor{gray!10}{8} & \cellcolor{gray!10}{\textcolor{cornflowerblue}{\textit{NRROS}}} & \cellcolor{gray!10}{Regulates Ser/Thr kinase signaling pathway, activating MEK \citep{zheng1994activation}}\\
9 & \textit{LYZ} & Encodes lysozyme; shown to exhibit aberrant expression in tumor cells \citep{gu2023aberrant}\\
\cellcolor{gray!10}{10} & \cellcolor{gray!10}{\textit{NPAS2}} & \cellcolor{gray!10}{Regulator of circadian rhythms and tumor suppressor involved in DNA damage response \citep{hoffman2008circadian}}\\
\bottomrule
\end{tabular}
\caption{Top-10 genes from RAMPART and their connections to PD-0325901. Genes highlighted in blue were identified by the GO enrichment analysis to be involved in the ``regulation of transmembrane receptor protein serine/threonine (Ser/Thr) kinase signaling pathway'' and ``regulation of cellular response to growth factor stimulus.''}
\label{table: case study}
\end{center}
\end{table}

\section{Discussion}
\label{section: discussion}

In this paper, we introduced RAMPART, a novel framework that achieves accurate feature ranking in high-dimensional settings by combining minipatch ensembles with recursive trimming. Our genomics case study demonstrates how precise feature ranking can drive scientific discovery by identifying key drivers in complex biological processes and guiding future research. RAMPART opens several promising research directions. First, handling high feature correlation remains challenging and requires methods that account for complex dependency structures. Second, adapting minipatch sizes to scale with the shrinking feature pool could enable more precise comparisons while maintaining computational efficiency. Third, while we provide initial theoretical support for top-$k$ feature importance ranking, refining assumptions and bounds could better explain the empirical performance gap between RAMPART and RAMP. Finally, our minipatch ensemble framework offers a flexible foundation for exploring alternative adaptive sampling methods to further enhance top-$k$ ranking performance.

\bibliography{references}
\bibliographystyle{unsrtnat}

\appendix

\section{Proofs}
\label{appendix: proofs}

\subsection{Auxiliary Lemmas}

\begin{lemma}
\label{lemma: separability}
    Let $\mu_j = \mathbf{E}[\tilde{r}_j]$ be the expected rank of feature $j$ under the ranking procedure $\mathcal{M}$, where the expectation is taken over minipatches that sample feature $j$. Next, define 
    \[\Delta := \min_{j, j' \in [M], r_j < r_{j'}} \mu_{j'} - \mu_j\]
    to be the smallest difference between the expected rank estimates across all features in $[M]$. Then under assumptions \ref{assumption: consistency}, \ref{assumption: unbiased} and \ref{assumption: bounded deviation}, we have
    \[\Delta > (2p - 1) \cdot \big(\dfrac{m - 1}{M - 1}\big)\]
\end{lemma}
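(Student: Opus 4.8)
The plan is to prove the bound one pair at a time: I fix any two features $j, j'$ with $r_j < r_{j'}$, show that $\mu_{j'} - \mu_j > (2p-1)\frac{m-1}{M-1}$, and then take the minimum over the finitely many such admissible pairs. The starting point is a law-of-total-expectation decomposition of each expected rank according to whether the other feature lands in the same minipatch. Writing $q := \mathbf{P}(j' \in F \mid j \in F)$, a direct counting argument over uniformly random feature subsets of size $m$ from $[M]$ gives $q = \frac{m-1}{M-1}$, and by symmetry $\mathbf{P}(j \in F \mid j' \in F) = q$ as well. Conditioning $\mu_j = \mathbf{E}[\tilde r_j \mid j \in F]$ and $\mu_{j'} = \mathbf{E}[\tilde r_{j'} \mid j' \in F]$ on co-occurrence and subtracting, the two co-occurrence conditionals share the common weight $q$, so
\[
\mu_{j'} - \mu_j = q\,\mathbf{E}\big[\tilde r_{j'} - \tilde r_j \mid j, j' \in F\big] + (1-q)\big(\mathbf{E}[\tilde r_{j'}\mid j'\in F, j\notin F] - \mathbf{E}[\tilde r_j \mid j\in F, j'\notin F]\big).
\]

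Next I bound the two terms separately. For the separate-patch term, Assumption~\ref{assumption: unbiased} gives directly that the bracketed difference is strictly positive, so it can only help. The work is in the co-occurrence term. Since within a minipatch the procedure returns integer ranks in $\{0,\dots,m-1\}$, conditioning on the sign of $\tilde r_{j'}-\tilde r_j$ I would write $\mathbf{E}[\tilde r_{j'}-\tilde r_j\mid j,j'\in F] = p_{jj'} A - (1-p_{jj'}) B$, where $p_{jj'} := \mathbf{P}(\tilde r_j < \tilde r_{j'}\mid j,j'\in F)$, $A := \mathbf{E}[\tilde r_{j'}-\tilde r_j \mid \tilde r_{j'}>\tilde r_j]$ is the correctly-ordered margin, and $B := \mathbf{E}[\tilde r_j - \tilde r_{j'}\mid \tilde r_j > \tilde r_{j'}]$ is the incorrectly-ordered margin. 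Integrality forces $A \ge 1$ and $B \ge 1$; Assumption~\ref{assumption: bounded deviation} gives $A - B > C > 0$, hence $A \ge B$; and Assumption~\ref{assumption: consistency} gives $p_{jj'}\ge p > \tfrac12$. Then
\[
p_{jj'} A - (1-p_{jj'}) B \;\ge\; p_{jj'}A - (1-p_{jj'})A \;=\; (2p_{jj'}-1)A \;\ge\; 2p_{jj'}-1 \;\ge\; 2p-1,
\]
where the first step uses $B \le A$, the second uses $A \ge 1$ together with $2p_{jj'}-1 > 0$, and the last uses $p_{jj'}\ge p$.

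Combining the two bounds yields $\mu_{j'}-\mu_j \ge q(2p-1) + (1-q)(\text{strictly positive}) > q(2p-1) = (2p-1)\frac{m-1}{M-1}$, where $1-q>0$ because $m<M$. Since this strict inequality holds for every admissible pair and there are finitely many, the minimum defining $\Delta$ also exceeds $(2p-1)\frac{m-1}{M-1}$, as claimed. The hard part is the co-occurrence term: the three qualitative assumptions alone only guarantee $\Delta > 0$, and the crux is converting them into the explicit $\frac{m-1}{M-1}$ factor by (i) exploiting the integrality of within-patch ranks to lower-bound the conditional margins $A,B$ by $1$, and (ii) noticing that the two co-occurrence conditionals carry the identical weight $q=\frac{m-1}{M-1}$, which lets the only-qualitatively-controlled separate-patch term drop out while still producing a strict inequality. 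One technical point I would handle along the way is possible ties $\tilde r_j = \tilde r_{j'}$ within a patch; these contribute zero to the co-occurrence expectation and only enlarge the complement of $p_{jj'}$, so the bound $\ge 2p-1$ persists.
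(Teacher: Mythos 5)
Your proof is correct and takes essentially the same route as the paper's: the identical law-of-total-expectation decomposition with common co-occurrence weight $q = \frac{m-1}{M-1}$, Assumption~\ref{assumption: unbiased} to discard the strictly positive separate-patch term, and a sign decomposition of the co-occurrence term combining Assumption~\ref{assumption: consistency}, Assumption~\ref{assumption: bounded deviation}, and the integrality bound $A \ge 1$. The only cosmetic difference is that you use Assumption~\ref{assumption: bounded deviation} merely to conclude $B < A$ and then drop $C$ entirely, whereas the paper carries $C$ through to the bound $A(2p-1) + C(1-p) > 2p-1$; both paths give the same result.
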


\begin{proof}
    First, observe that given the above assumptions, we necessarily have $\mu_j < \mu_{j'}$ whenever $r_j < r_{j'}$ as we have
    \begin{align*}
        \mu_j 
        &= \mathbf{E}[\tilde{r}_j | j \in F] \\
        &= \mathbf{E}[\tilde{r}_j | j \in F, {j'} \notin F] \cdot \mathbf{P}(j' \notin F \; | \; j \in F) + \mathbf{E}[\tilde{r}_j | j, j' \in F] \cdot \mathbf{P}(j' \in F \; | \; j \in F)\\
        &< \mathbf{E}[\tilde{r}_{j'} | j' \in F, j \notin F] \cdot \mathbf{P}(j \notin F \; | \; j' \in F) + \mathbf{E}[\tilde{r}_{j'} | j, j' \in F] \cdot \mathbf{P}(j \in F \; | \; j' \in F)\\
        &= \mathbf{E}[\tilde{r}_{j'} | j' \in F]\\
        &= \mu_{j'}    
    \end{align*}
    where the inequality follows from knowing that $\mathbf{E}[\tilde{r}_{j'} | j' \in F, j \notin F] > \mathbf{E}[\tilde{r}_{j} | j \in F, j' \notin F]$ by assumption \ref{assumption: unbiased}, that $\mathbf{P}(j' \notin F \; | \; j \in F) = \mathbf{P}(j \notin F \; | \: j' \in F)$, and that $\mathbf{E}[\tilde{r}_{j'} | j, j' \in F] > \mathbf{E}[\tilde{r}_j | j, j' \in F]$ by assumptions \ref{assumption: consistency} and \ref{assumption: bounded deviation} since
    \begin{align*}
         &\mathbf{P}(\tilde{r}_{j'} > \tilde{r}_j|j, j' \in F) \cdot \big(\mathbf{E}[\tilde{r}_{j'} | \tilde{r}_{j'} > \tilde{r}_j]- \mathbf{E}[\tilde{r}_j | \tilde{r}_{j'} > \tilde{r}_j]\big) \\> &\mathbf{P}(\tilde{r}_j > \tilde{r}_{j'}|j, j' \in F) \cdot \big(C + \mathbf{E}[\tilde{r}_j | \tilde{r}_j > \tilde{r}_{j'}] - \mathbf{E}[\tilde{r}_{j'} | \tilde{r}_j > \tilde{r}_{j'}]\big)
    \end{align*}
    and rearranging gives the desired result. It follows that for any $j, j' \in [M]$ with $r_j < r_{j'}$
    \begin{align*}
        \mu_{j'} - \mu_j 
        &= \mathbf{E}[\tilde{r}_{j'} | j, j' \in F] \cdot \mathbf{P}(j \in F \; | \; j' \in F) + \mathbf{E}[\tilde{r}_{j'} | j' \in F, j \notin F] \cdot \mathbf{P}(j \notin F \; | \; j' \in F) \\
        &- \mathbf{E}[\tilde{r}_j | j, j' \in F] \cdot \mathbf{P}(j' \in F \; | \; j \in F) - \mathbf{E}[\tilde{r}_j | j \in F, j' \notin F) \cdot \mathbf{P}(j' \notin F \; | \; j \in F)\\
        &> \mathbf{E}[\tilde{r}_{j'} - \tilde{r}_j | j, j' \in F] \cdot \mathbf{P}(j \in F \; | \; j' \in F)\\
        &= \big(\mathbf{E}[\tilde{r}_{j'} - \tilde{r}_j | \tilde{r}_{j'} > \tilde{r}_j] \cdot \mathbf{P}(\tilde{r}_{j'} > \tilde{r}_j)+ \mathbf{E}[\tilde{r}_{j'} - \tilde{r}_j | \tilde{r}_j > \tilde{r}_{j'}] \cdot \mathbf{P}(\tilde{r}_j > \tilde{r}_{j'})\big) \cdot \mathbf{P}(j \in F \; | \; j' \in F)\\
        &> \big(\mathbf{E}[\tilde{r}_{j'} - \tilde{r}_j | \tilde{r}_{j'} > \tilde{r}_j] \cdot p + (C - \mathbf{E}[\tilde{r}_{j'} - \tilde{r}_j | \tilde{r}_{j'} > \tilde{r}_j]) \cdot (1 - p)\big) \cdot \mathbf{P}(j \in F \; | \; j' \in F)\\
        &= (\mathbf{E}[\tilde{r}_{j'} - \tilde{r}_j | \tilde{r}_{j'} > \tilde{r}_j] \cdot (2p - 1) + C(1 - p)) \cdot \mathbf{P}(j \in F \; | \; j' \in F)\\
        &> (2p - 1) \cdot \mathbf{P}(j \in F \; | \; j' \in F)\\
        &> (2p - 1) \cdot \dfrac{(m - 1)}{(M - 1)}
    \end{align*}
    where the first equality holds by the law of total conditional expectation, the first inequality holds by  assumption \ref{assumption: unbiased}, the second inequality holds by assumption \ref{assumption: bounded deviation}, and the second-to-last inequality holds since the difference between estimates ranks is at least $1$ and that $C > 0$.
\end{proof}

\begin{lemma} 
\label{lemma: gap} 
For iteration $t$ of algorithm \ref{algorithm: RAMPART}, define the minimum separation gap 
\[\Delta_t := \min_{j, j' \in \mathcal{C}_t, r_j < r_{j'}}\mu_j^t - \mu_{j'}^t\] 
where $\mu_j^t$ is the expected rank estimate of feature $j$ with respect to the feature pool $\mathcal{C}_t$. Under Assumptions \ref{assumption: consistency}-\ref{assumption: bounded deviation}, we have 
\[\Delta_{t} > 2^{t - 1} \cdot (2p - 1) \cdot (\dfrac{m - 1}{M - 1})\] 
for any iteration $t$. 
\end{lemma}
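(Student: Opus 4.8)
The plan is to recognize that Lemma \ref{lemma: gap} is essentially Lemma \ref{lemma: separability} re-applied to the restricted feature pool $\mathcal{C}_t$ in place of the full index set $[M]$, followed by substituting the pool size at round $t$. The crucial observation is that Assumptions \ref{assumption: consistency}--\ref{assumption: bounded deviation} are stated for an arbitrary subset $\mathcal{S} \subseteq [M]$ with $|\mathcal{S}| \ge m$, so they apply verbatim with $\mathcal{S} = \mathcal{C}_t$. The only quantity in the proof of Lemma \ref{lemma: separability} that depends on the ambient pool is the co-sampling probability $\mathbf{P}(j \in F \mid j' \in F)$, which for uniform minipatches of size $m$ drawn from a pool of size $|\mathcal{C}_t|$ equals $(m-1)/(|\mathcal{C}_t| - 1)$ rather than $(m-1)/(M-1)$.

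First I would re-run the chain of inequalities from the proof of Lemma \ref{lemma: separability}, now taking all expectations and probabilities over minipatches $F$ of size $m$ sampled from $\mathcal{C}_t$. Since every step there used only the three assumptions and the law of total conditional expectation---none of which is sensitive to whether the ambient pool is $[M]$ or $\mathcal{C}_t$---I obtain, for every pair $j, j' \in \mathcal{C}_t$ with $r_j < r_{j'}$,
\[
\mu_{j'}^t - \mu_j^t \;>\; (2p - 1) \cdot \mathbf{P}(j \in F \mid j' \in F) \;=\; (2p - 1) \cdot \frac{m - 1}{|\mathcal{C}_t| - 1}.
\]
Taking the minimum over all such pairs yields $\Delta_t > (2p-1)(m-1)/(|\mathcal{C}_t| - 1)$.

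Next I would invoke the halving structure of Algorithm \ref{algorithm: RAMPART}: since $\mathcal{C}_1 = [M]$ and $|\mathcal{C}_{t+1}| = \lfloor |\mathcal{C}_t| / 2 \rfloor$, we have $|\mathcal{C}_t| \le M / 2^{t-1}$. Because a smaller pool only shrinks the denominator and hence enlarges the bound, I would substitute $|\mathcal{C}_t| - 1 \le M/2^{t-1} - 1 = (M - 2^{t-1})/2^{t-1}$ to get
\[
\Delta_t \;>\; (2p-1)(m-1) \cdot \frac{2^{t-1}}{M - 2^{t-1}} \;\ge\; 2^{t-1} (2p-1) \cdot \frac{m-1}{M-1},
\]
where the last inequality holds since $M - 2^{t-1} \le M - 1$ for $t \ge 1$. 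This is exactly the claimed bound.

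I do not anticipate a substantive obstacle, since the argument is a direct transfer of Lemma \ref{lemma: separability}. The only points requiring care are (i) confirming that the reduced pool never drops below the minipatch size, i.e. $|\mathcal{C}_t| \ge m$ throughout, so that the assumptions remain applicable and the co-sampling probability is well defined---this holds because the algorithm halts once the pool reaches size $\approx k$, provided $m \le k$---and (ii) handling the floor in the halving step, which as noted only strengthens the inequality and can therefore be absorbed into the $\le$ bounds rather than tracked exactly. A minor bookkeeping remark: the statement as written measures $\mu_j^t - \mu_{j'}^t$, but for the gap to be the positive quantity matching Lemma \ref{lemma: separability} the intended ordering is $\mu_{j'}^t - \mu_j^t$ for $r_j < r_{j'}$, which is precisely what the above establishes.
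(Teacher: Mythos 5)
Your proof is correct and follows essentially the same route as the paper's: both apply Lemma~\ref{lemma: separability} with the ambient pool replaced by $\mathcal{C}_t$ (of size $M/2^{t-1}$ by the halving structure), obtain $\Delta_t > (2p-1)(m-1)/(|\mathcal{C}_t|-1)$, and conclude via $M/2^{t-1} - 1 \le (M-1)/2^{t-1}$. Your additional remarks---the floor in the halving step, the requirement $|\mathcal{C}_t| \ge m$, and the sign typo $\mu_j^t - \mu_{j'}^t$ in the lemma statement---are valid points of care that the paper's one-line proof glosses over.
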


\begin{proof}
    At any iteration $t$, we know from lemma \ref{lemma: separability} that 
\[\Delta_{t + 1} > (2p - 1)\cdot \dfrac{m - 1}{|C_{t + 1}| - 1} = (2p - 1) \cdot \dfrac{m - 1}{M/2^t - 1} \ge 2^t \cdot (2p - 1) \cdot \dfrac{m - 1}{M - 1}\]
\end{proof}

\subsection{Proof of Theorem \ref{theorem: RAMP}}
\label{appendix: proof of theorem RAMP}

\begin{proof}
    Let $\mathcal{T} = \{\tau_1, \dots, \tau_k\}$ denote the set of top-$k$ features. In addition, for any feature $j \in [M]$ let $S_{j}$ denote the set of minipatches where feature $j$ is included. Next, for any given feature $j$ we order the minipatches in $S_j$ arbitrarily as $\kappa_1, \dots, \kappa_{n_j}$ where $n_j = |S_j|$ and define the martingale difference sequence indexed by
    \[X_s = \tilde{r}_j^{\kappa_s} - \mathbf{E}[\tilde{r}_j^{\kappa_s} | \mathcal{F}_{s - 1}]\]
    where $\mathcal{F}_{s - 1} = \sigma(\kappa_1, \dots, \kappa_{s - 1})$. We see that $X_s$ is $\mathcal{F}_s$-measurable and 
    \[\mathbf{E}[X_s | \mathcal{F}_{s - 1}] = \mathbf{E}[\tilde{r}_j^{\kappa_s} - \mathbf{E}[\tilde{r}_j^{\kappa_s} | \mathcal{F}_{s - 1}] | \mathcal{F}_{s - 1}] = 0\]
    We also trivially see that $|X_s| \leq m - 1$. Then, by Azuma-Hoeffding's inequality we can write
    \begin{align*}
        \mathbf{P}(\bar{r}_j - \mu_j \ge \epsilon)
        &= \mathbf{P}(M_{n_j} \ge n_j\epsilon)
        \leq \exp(-\dfrac{n_j^2\epsilon^2}{2n_j(m - 1)^2})
        = \exp(-\dfrac{n_j\epsilon^2}{2(m - 1)^2})
    \end{align*}
    where the first equality holds by observing that 
    \[M_{n_j} := \sum_{s = 1}^{n_j}X_s = \sum_{s = 1}^{n_j}\tilde{r}_j^{\kappa_s} - \mathbf{E}[\tilde{r}_j^{\kappa_s} | \mathcal{F}_{s - 1}] = n_j(\bar{r}_j - \mu_j)\]
    Next, since $n_j \sim \text{Binomial}(B, q)$ where $q = m / M$, we have by the Chernoff bound that
    \[\mathbf{P}(n_j \leq \dfrac{Bq}{2}) \leq \exp(-\dfrac{Bq}{8})\]
    Then, by the law of total probability we can write
    \begin{align*}
        \mathbf{P}(\bar{r}_j - \mu_j \ge \epsilon)
        &= \mathbf{P}(\bar{r}_j - \mu_j \ge \epsilon, n_j \leq \frac{Bq}{2}) + \mathbf{P}(\bar{r}_j - \mu_j \ge \epsilon, n_j \ge \frac{Bq}{2})\\
        &\leq \exp(- \dfrac{Bq}{8}) + \mathbf{P}(\bar{r}_i - \mu_i \ge \epsilon, n_i \ge \dfrac{Bq}{2})\\
        &\leq \exp(-\dfrac{Bq}{8}) + \exp(-\dfrac{Bq\epsilon^2}{4(m - 1)^2})
    \end{align*}
    As before, we define \[\Delta := \min_{j, j' \in [M], r_j < r_{j'}} \mu_{j'} - \mu_j\] and set $\epsilon = \Delta / 2$. By assumption \ref{assumption: no ties} this quantity is non-zero and well-defined. Then we can write for some top-$k$ feature $j$ and other feature $j'$ where $r_j < r_{j'}$ that
    \begin{align*}
        \mathbf{P}(\bar{r}_j \ge \bar{r}_{j'}) 
        &\leq \mathbf{P}(\bar{r}_j \ge \mu_j + \epsilon) + \mathbf{P}(\bar{r}_{j'} \leq \mu_{j'} - \epsilon)
    \end{align*}
    We see that the probability of any true top-$k$ feature not being ranked correctly is upper bounded by
    \[\mathbf{P}_{\text{err}} \leq \mathbf{P}\big(\bigcup_{j \in \mathcal{T}} \bigcup_{j': r_{j'} > r_j}\{\bar{r}_j \ge \bar{r}_{j'}\}\big)\]
    By the union bound, we see we want to find $B$ such that
    \begin{align*}
        \mathbf{P}_{\text{err}} \leq \sum_{j \in \mathcal{T}} \sum_{j': r_{j'} > r_j}\mathbf{P}(\bar{r}_j \ge \bar{r}_{j'}) \leq 2kM \cdot \big(\exp(-\dfrac{Bq}{8}) + \exp(-\dfrac{Bq\epsilon^2}{4(m - 1)^2})\big) \leq \delta
    \end{align*}
    One way to choose $B$ is such that each of the two exponentials $\leq \delta / 4kM$. Solving for these two conditions separately gives
    \begin{align*}
        B &\ge \max\big\{\dfrac{8}{q}\ln(\dfrac{4kM}{\delta}), \dfrac{4(m - 1)^2}{q\epsilon^2}\ln(\dfrac{4kM}{\delta})\big\} \\
        &= \dfrac{4(m - 1)^2}{q\epsilon^2}\ln(\dfrac{4kM}{\delta})\\
        &= \dfrac{16M(M-1)^2}{(2p - 1)^2m}\ln(\dfrac{4kM}{\delta})
    \end{align*}
    where $\epsilon = \Delta / 2$ and $\Delta \ge (2p - 1)(\dfrac{m - 1}{M - 1})$ by lemma \ref{lemma: separability}. Choosing $C = \dfrac{16}{(2p - 1)^2}$ yields the result.
\end{proof}

\subsection{Proof of Theorem \ref{theorem: RAMPART}}
\label{appendix: proof of theorem RAMPART}
\begin{proof}
    Let $N_t = M / 2^{t - 1}$ denote the feature pool size at iteration $t$, $q_t = m / N_t$, $\epsilon_t = \delta_t / 2$, and $\mathcal{D}_t$ be the features ranked in the bottom half at iteration $t$ to be discarded. We trivially see that $r_{j'} > r_j$ for any $j' \in \mathcal{D}_t$ and $j \in \mathcal{T}$. Furthermore, any top-$k$ feature $j$ at iteration $t \leq T - 1$ will survive onto the next round by being ranked in the upper-half of the feature pool. Then defining $E_{j, t}$ as the event that feature $j$ is incorrectly eliminated at iteration $t$, we can write for any $t \leq T - 1$ that
    \begin{align*}
        \mathbf{P}(E_{j, t}) 
        &\leq \mathbf{P}(\bigcup_{j' \in \mathcal{D}_t}\{\bar{r}_j^t \ge \bar{r}_{j'}^t\}) \\
        &\leq \sum_{j' \in \mathcal{D}_t} \mathbf{P}(\bar{r}_j^t \ge \bar{r}_{j'}^t)\\
        &\leq \dfrac{N_t}{2} \big(\mathbf{P}(\bar{r}_j^t \ge \mu_j^t + \epsilon_t) + \mathbf{P}(\bar{r}_{j'}^t \leq \mu_{j'}^t - \epsilon_t) \big)\\
        &= N_t \big(\exp(\dfrac{-B_tq_t}{8}) + \exp(-\dfrac{B_tq_t\epsilon_t^2}{4(m - 1)^2})\big)
    \end{align*}
    where the second inequality holds by the subadditivity of measure, and the last two (in)equalities hold by a similar argument found in appendix \ref{appendix: proof of theorem RAMP}. For each iteration $t \leq T - 1$, we see that the probability of error is upper bounded by
    \begin{align*}
        \mathbf{P}_{\text{err}}^t \leq \sum_{j \in \mathcal{T}} \mathbf{P}(E_{j, t})
    \end{align*}
    Whereas for the last iteration $T$, we need to correctly rank all top-$k$ features. Similarly following appendix \ref{appendix: proof of theorem RAMP}, we have
    \begin{align*}
        \mathbf{P}_{\text{err}}^T \leq \sum_{j \in \mathcal{T}}\mathbf{P}(E_{j, T}) \leq \sum_{j \in \mathcal{T}} \sum_{j': r_{j'} > r_j}\mathbf{P}(\bar{r}_j \ge \bar{r}_{j'})
    \end{align*}
    Next, we want to control each term $\mathbf{P}_{\text{err}}^t \leq \delta / 2T$ and $\mathbf{P}_{\text{err}}^T \leq \delta / T$ such that the total probability of error is bounded within $\mathbf{P}_{\text{err}} \leq \delta / 2 + \delta / T$. To achieve this, and again following a similar argument as outlined in appendix \ref{appendix: proof of theorem RAMP}, we can choose \[B_t \ge \dfrac{4(m-1)^2}{q_t\epsilon_t^2} \ln (\dfrac{4kTN_t}{\delta})\]
    for $t \leq T - 1$ and 
    \[B_T \ge \dfrac{4(m-1)^2}{q_t\epsilon_t^2} \ln(\dfrac{4kTN_T}{\delta}).\]
    Set $\Delta_1 = (2p - 1)\frac{m - 1}{M - 1}$
    Then following lemma \ref{lemma: gap}, we can write $B_{\text{RAMPART}}$ as
    \begin{align*}
        B_{\text{RAMPART}} 
        &\leq \sum_{t = 1}^{T} B_t = \dfrac{16(m - 1)^2}{m}\sum_{t = 1}^{T}\dfrac{N_t}{\Delta_t^2}\ln \big(\dfrac{4kTN_t}{\delta}\big)\\
        &= \dfrac{16(m-1)^2}{m\Delta_1^2} \sum_{t = 1}^{T} \dfrac{N_t}{(2^{t - 1})^2} \ln \big(\dfrac{4kTN_t}{\delta}\big) \\
        &= \dfrac{16(m - 1)^2}{m\Delta_1^2}\sum_{t = 1}^{T}\dfrac{M}{2^{2(t - 1) + t - 1}}\big(\ln(\dfrac{4kTM}{\delta}) - (t - 1)\ln 2\big) \\
        &= \dfrac{16(m - 1)^2}{m\Delta_1^2} \cdot \big(M\ln(\dfrac{4kTM}{\delta})\sum_{t = 1}^{T}\dfrac{1}{2^{3t - 3}} - M \ln 2 \sum_{t = 1}^{T}\dfrac{(t - 1)}{2^{3t - 3}}\big)\\
        &\leq \dfrac{16M(m - 1)^2}{m\Delta_1^2} \cdot \big(\dfrac{6}{5}\ln\big(\dfrac{4kTM}{\delta}\big) - \dfrac{1}{10}\ln 2\big) \\
        &\leq \dfrac{16M(m - 1)^2}{m\Delta_1^2} \cdot \big(\dfrac{6}{5}\ln\big(\dfrac{4k(\log_2(M) - \log_2(k) + 1)M}{\delta}\big)\big)\\
        &\sim \dfrac{16M(m - 1)^2}{m\Delta_1^2}  \cdot \big(\dfrac{6}{5}\ln(\dfrac{4kM}{\delta})\big)
    \end{align*}
    where in the second-to-last inequality follows from knowing that $T \ge 3$. 
\end{proof}

\section{Additional Simulation Results}
\label{appendix: additional simulation results}
In this section, we further validate RAMPART’s effectiveness and our theoretical analysis by examining how the probability of exactly recovering the true top-$k$ features scales with the total number of minipatches. We then assess robustness in higher dimensions ($M=1000,2000$) and analyze the impact of minipatch size on ranking accuracy.

\subsection{Theory Validation}
We first verify our theoretical results stated in Theorem \ref{theorem: RAMPART} by examining how the number of minipatches affects ranking performance of RAMP and RAMPART. We generate synthetic data with $N = 1000$ observations and $M = 160$ features drawn independently from a standard normal distribution $\mathbf{X} \sim \mathcal{N}(0, I)$ with $\mathbf{Y} = 0.22 \cdot \mathbf{X}\beta + \epsilon$, where $\epsilon$ is unit Gaussian noise. The coefficient vector $\beta$ is constructed to have four non-zero features with coefficients 4, 3, 2, 1. Since features are on the same scale, the magnitude of these coefficients directly determines the feature importance ordering, providing a clear ground truth for evaluating ranking performance. We compare RAMP and RAMPART with minipatch parameters $n = 80$ and $m = 20$ (the trimming process also terminates when there are 20 features remaining). To ensure a fair comparison, we allocate a total budget of $B$ minipatches across the five halving iterations for RAMPART, while using $5B$ minipatches for RAMP to match the total computation. We evaluate the empirical success probability averaged over 500 random trials, where success is defined as exactly recovering the ranks of the top four features.

\begin{figure}[hbt!]
\begin{center}
\includegraphics[width= 0.85\columnwidth]{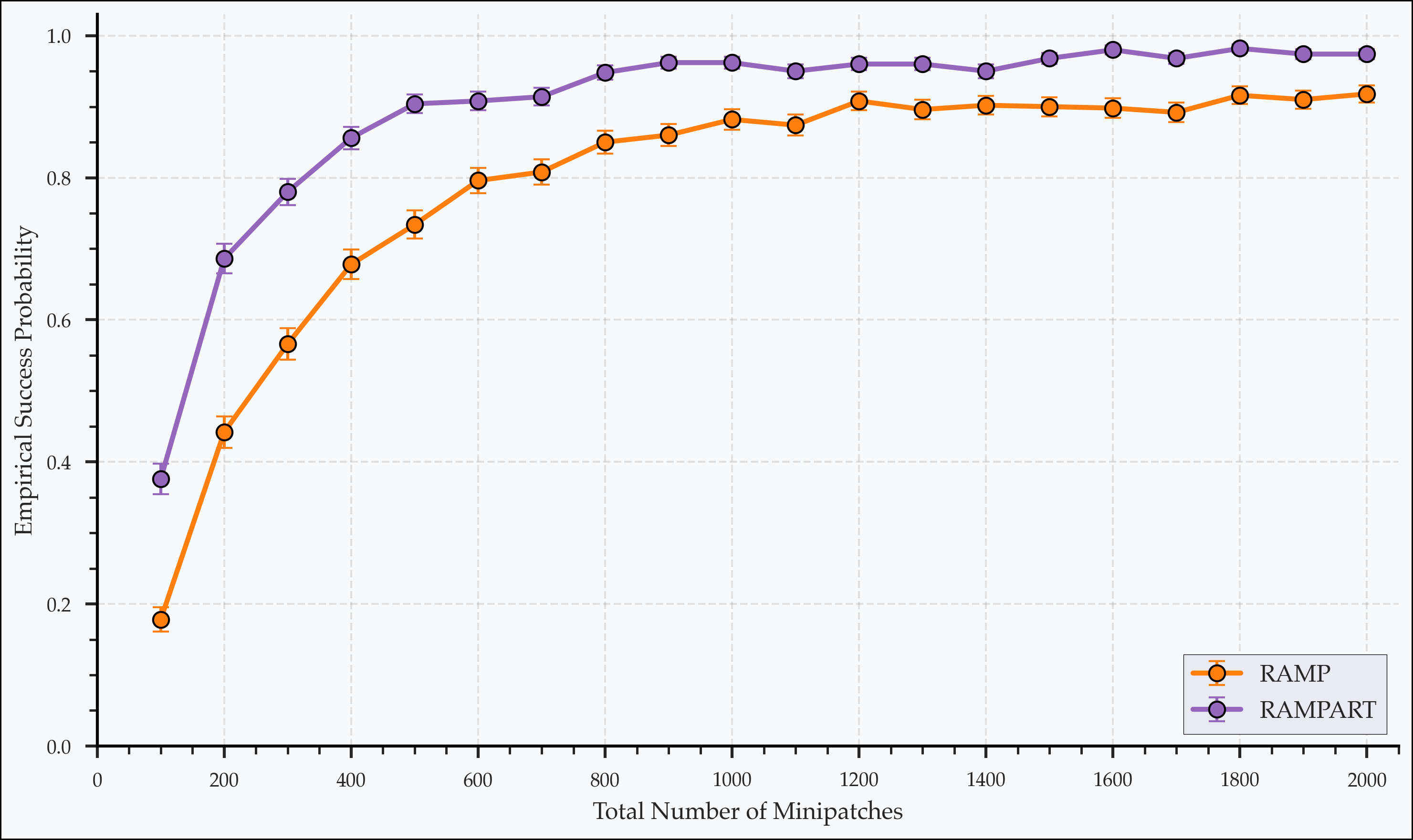}
\caption{Number of Minipatches vs Ranking Success Probability} \label{fig: num minipatch}
\end{center}
\end{figure}

Figure \ref{fig: num minipatch} shows the empirical success probability as a function of the total number of minipatches. In general, both methods demonstrate increasing accuracy with more minipatches, with RAMPART consistently achieves superior ranking accuracy compared to RAMP, validating the consistency guarantees of Theorems \ref{theorem: RAMP} and \ref{theorem: RAMPART}. 

While our theory shows RAMPART and RAMP require the same order of total minipatches, empirically RAMPART achieves better performance, particularly in the regime of 100-1000 minipatches where its adaptive resource allocation proves beneficial. This suggests that RAMPART's strategy of progressively focusing computation on promising features provides practical advantages beyond what our theoretical analysis captures. Bridging this gap to obtain sharper theoretical guarantees that better align with empirical performance remains an exciting direction for future work.

\subsection{Higher Dimensions}

For $M = 1000$ and $2000$, we extend the setup described in Section \ref{section: empirical studies} by adding more nonzero features while preserving the same coefficient structure. We make two implementation adjustments to accommodate higher dimensionality: (1) For baseline FI, SHAP, and feature permutation with random forests, we scale the number of trees (200 trees for $M = 1000$ and 400 trees for $M = 2000$) to ensure adequate feature space coverage; (2) For RAMPART and RAMP, we set $B = 4000$ and $B = 20000$ minipatches respectively, while keeping all other parameters unchanged.

The results for $M = 1000$ (Figures \ref{fig: claig1000} \& \ref{fig: regig1000}) demonstrate RAMPART's robust performance across both covariance structures, with its advantage becoming particularly pronounced under autoregressive covariance. RAMPART's sequential halving strategy proves especially effective in these higher-dimensional settings, showing remarkable stability compared to $M = 500$ while other methods exhibit noticeable degradation. At extreme dimensionality ($M = 2000$, Figures \ref{fig: claig2000} \& \ref{fig: regig2000}), RAMPART continues to excel, maintaining strong performance even at high signal-to-noise ratios under autoregressive covariance, while competing methods show significant accuracy drops. These results highlight RAMPART's unique ability to handle both high dimensionality and complex feature interactions through its adaptive resource allocation strategy.

We also extend our ablation studies to higher dimensions using the same methodology as Section \ref{subsection: comparative}, focusing on the identity covariance setting and examining columns 2 and 5 (SNR = $0.06$ and SNR = $0.5$) across both classification and regression tasks. As before, we maintain consistent experimental conditions by using the same predictive models, data generation processes, and hyperparameter configurations across all settings. For regression tasks, we measure the mean squared error (MSE) rather than classification error. Figures \ref{fig: ablation Reg_500}-\ref{fig: ablation Reg_2000} demonstrate that RAMPART maintains its performance advantage across all dimensions, affirming that RAMPART's adaptive resource allocation strategy remains valuable even in higher-dimensional settings where feature ranking becomes more challenging.

\subsection{Impact of Minipatch Size}

We also investigate the effect of varying feature subsample size $m$ within the RAMP framework, returning to the $M = 500$ setting with experimental conditions from Section \ref{section: empirical studies}. Figures \ref{fig: MP_Cla} and \ref{fig: MP_Reg} demonstrate that while performance remains stable across minipatch sizes for linear models under identity covariance, smaller minipatches ($m = 10$) consistently outperform larger ones ($m = 50$) in nonlinear additive and correlated settings. These results suggest that smaller minipatch sizes may better capture local feature interactions while avoiding noise from irrelevant features. While we fix $m = 10$ for RAMPART in this work, exploring adaptive minipatch sizes that scale with feature pool size across halving iterations remains an interesting direction for future research.

\begin{figure}[hbt!]
\begin{center}
\includegraphics[width= \textwidth]{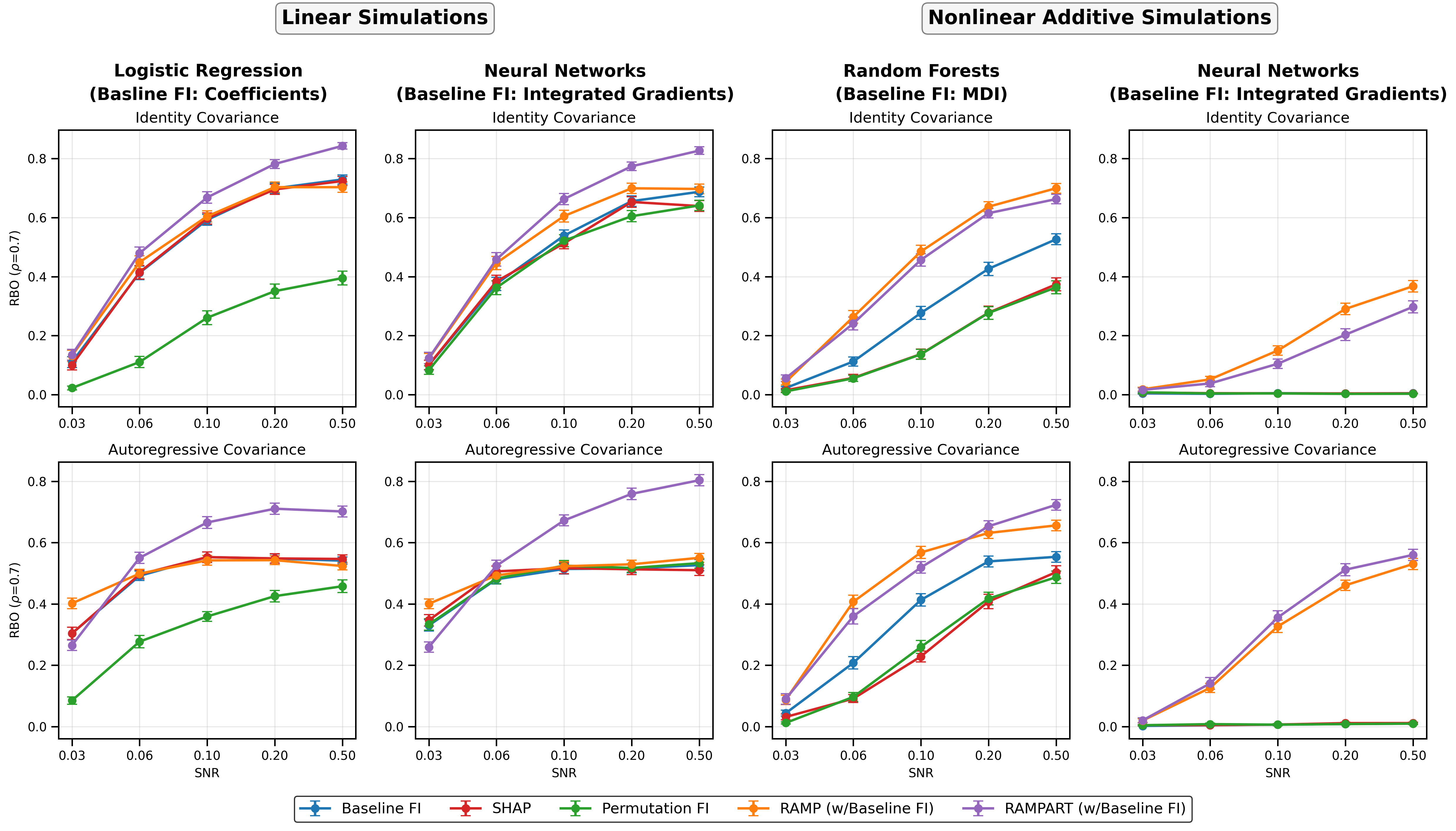}
\caption{Ranking accuracy (RBO with $\rho = 0.7$) for classification tasks  ($M = 1000$)} \label{fig: claig1000}
\end{center}
\end{figure}

\begin{figure}[hbt!]
\begin{center}
\includegraphics[width= \textwidth]{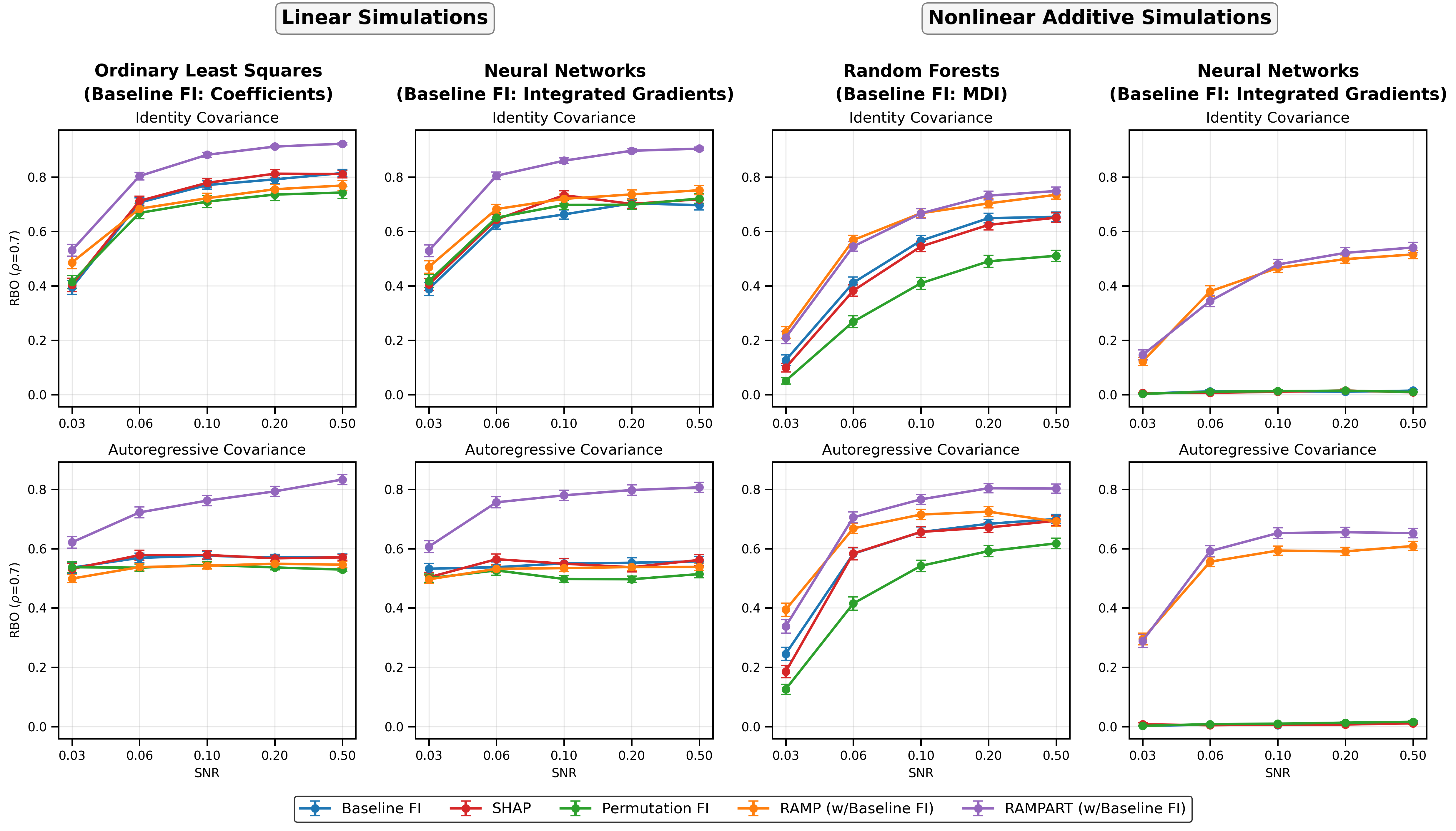}
\caption{Ranking accuracy (RBO with $\rho = 0.7$) for regression tasks  ($M = 1000$)} \label{fig: regig1000}
\end{center}
\end{figure}

\begin{figure}[hbt!]
\begin{center}
\includegraphics[width= \textwidth]{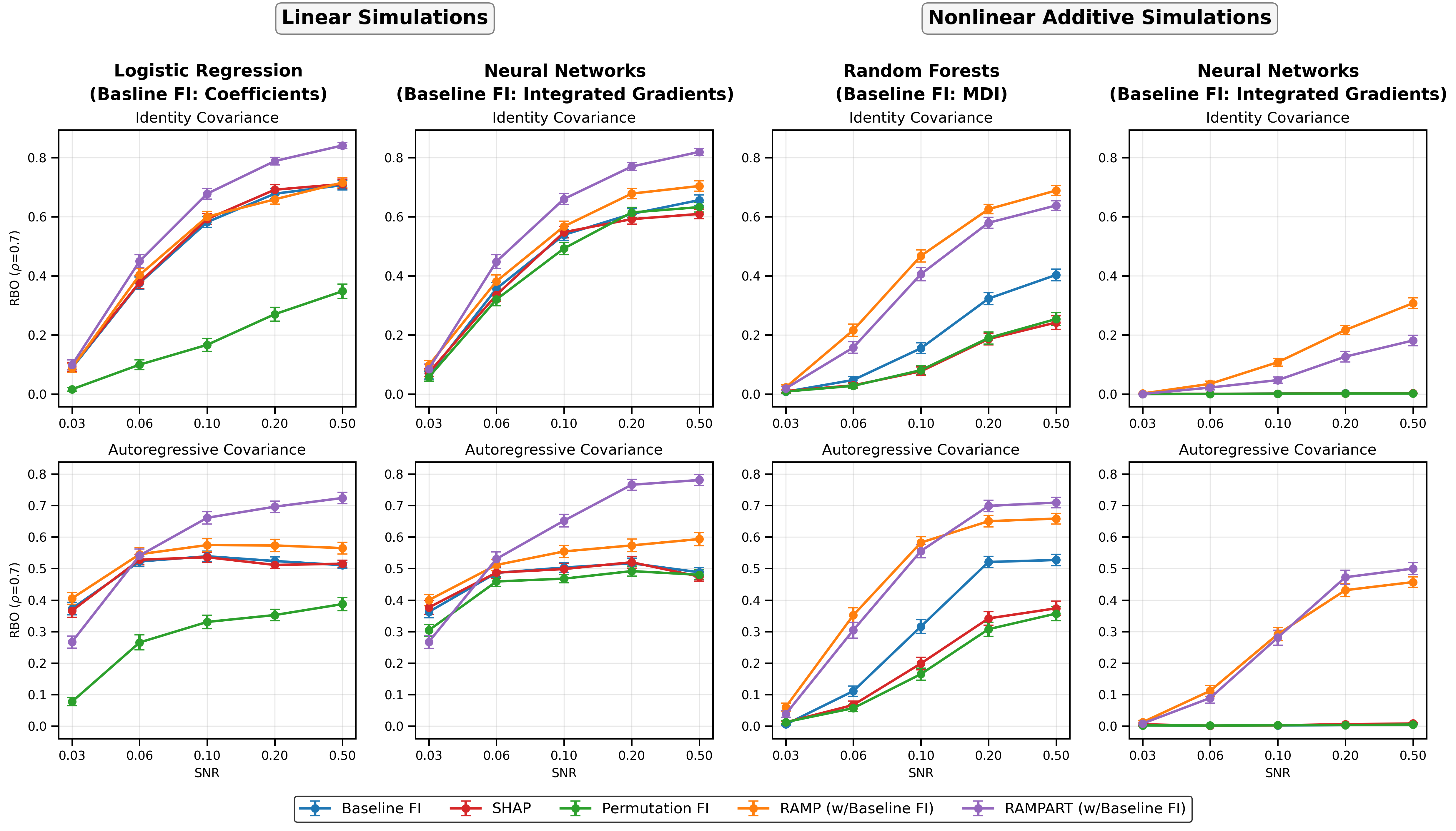}
\caption{Ranking accuracy (RBO with $\rho = 0.7$) for classification tasks  ($M = 2000$)} \label{fig: claig2000}
\end{center}
\end{figure}

\begin{figure}[hbt!]
\begin{center}
\includegraphics[width= \textwidth]{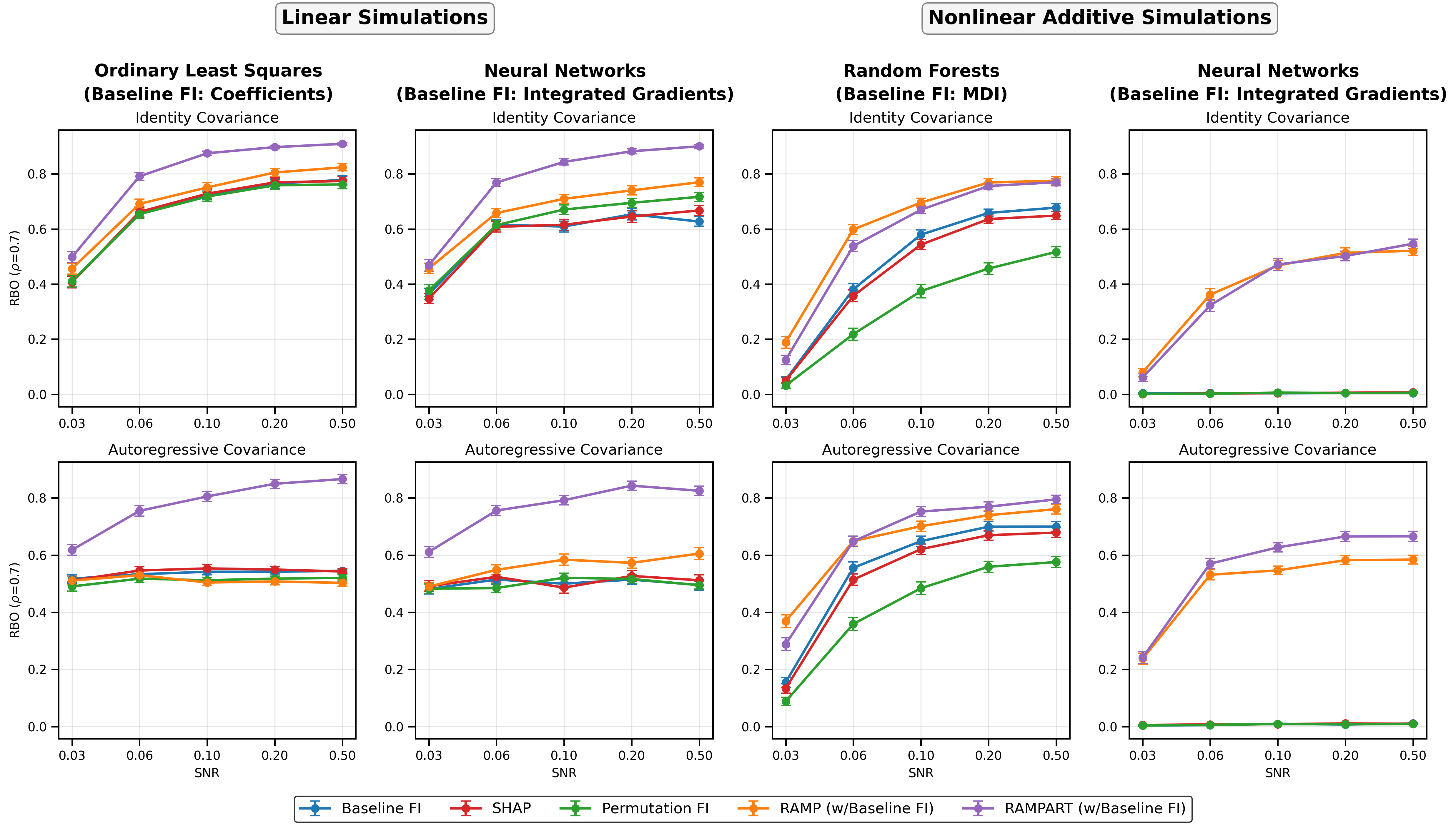}
\caption{Ranking accuracy (RBO with $\rho = 0.7$) for regression tasks  ($M = 2000$)} \label{fig: regig2000}
\end{center}
\end{figure}

\begin{figure}[hbt!]
\begin{center}
\includegraphics[width= \textwidth]{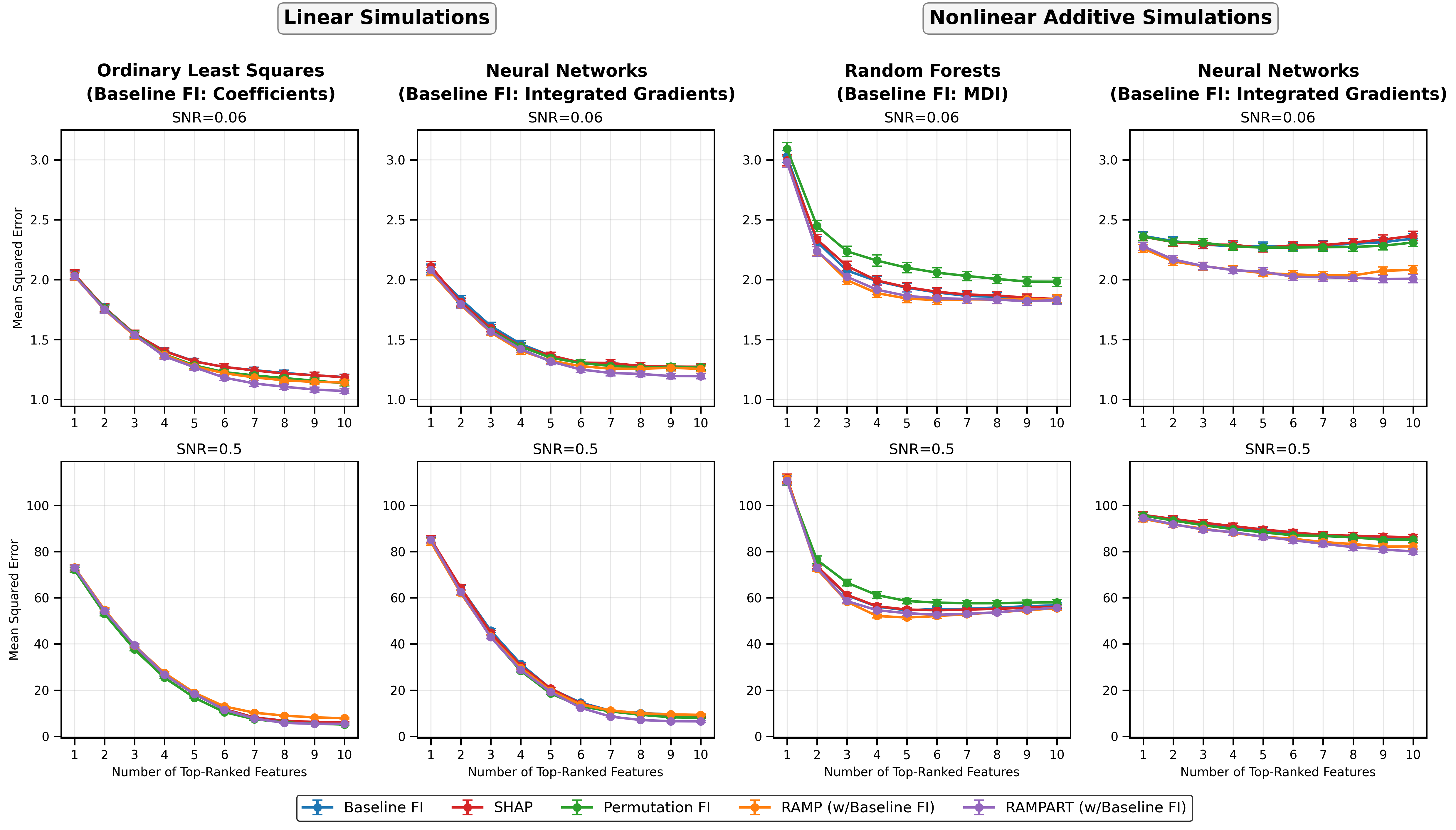}
\caption{Mean-Squared error vs. number of top-ranked features used as predictors in ablation simulation with identity covariance ($M = 500$)} \label{fig: ablation Reg_500}
\end{center}
\end{figure}

\begin{figure}[hbt!]
\begin{center}
\includegraphics[width= \textwidth]{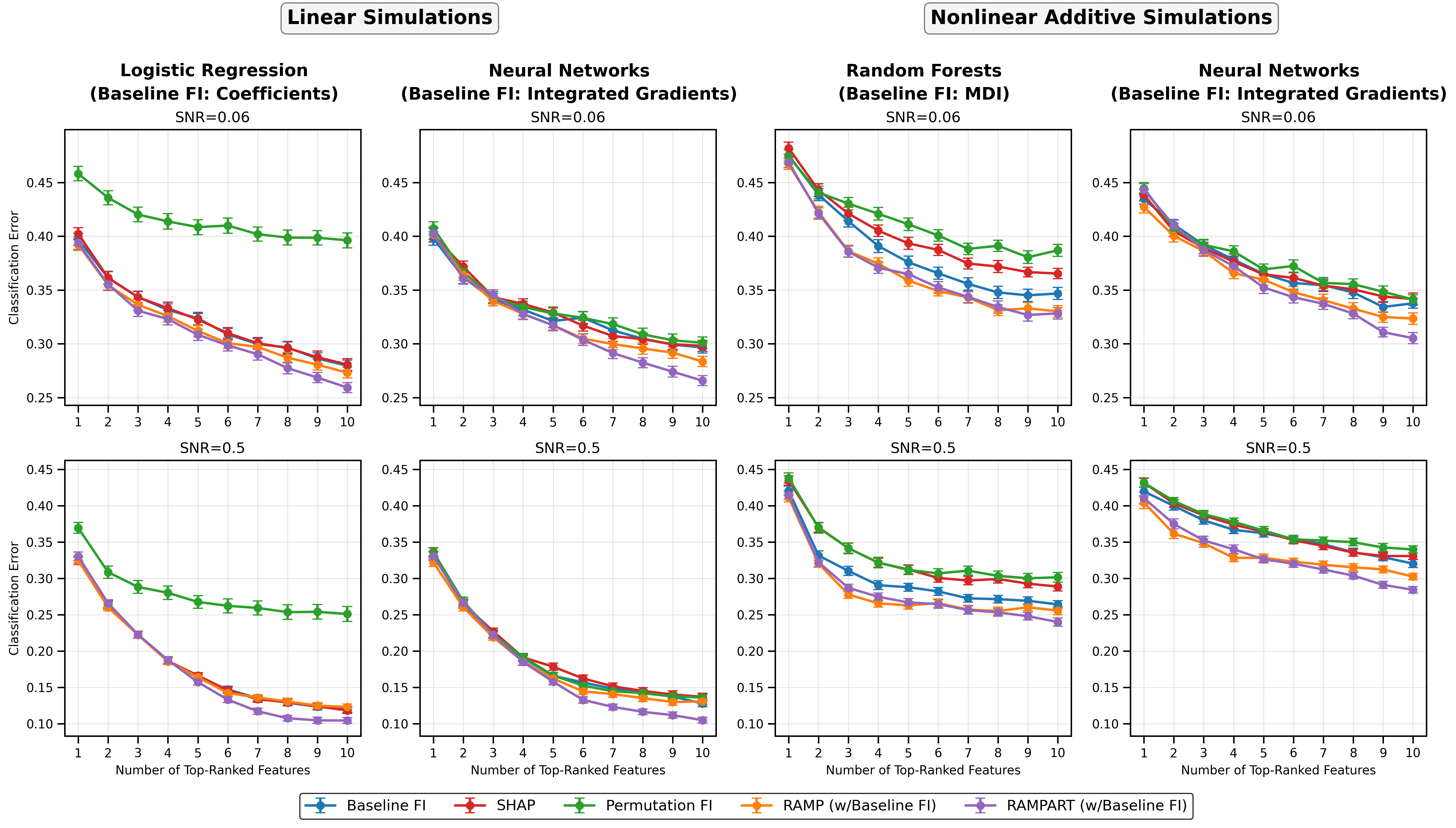}
\caption{Classification error vs. number of top-ranked features used as predictors in ablation simulation with identity covariance ($M = 1000$)} \label{fig: ablation Cla_1000}
\end{center}
\end{figure}

\begin{figure}[hbt!]
\begin{center}
\includegraphics[width= \textwidth]{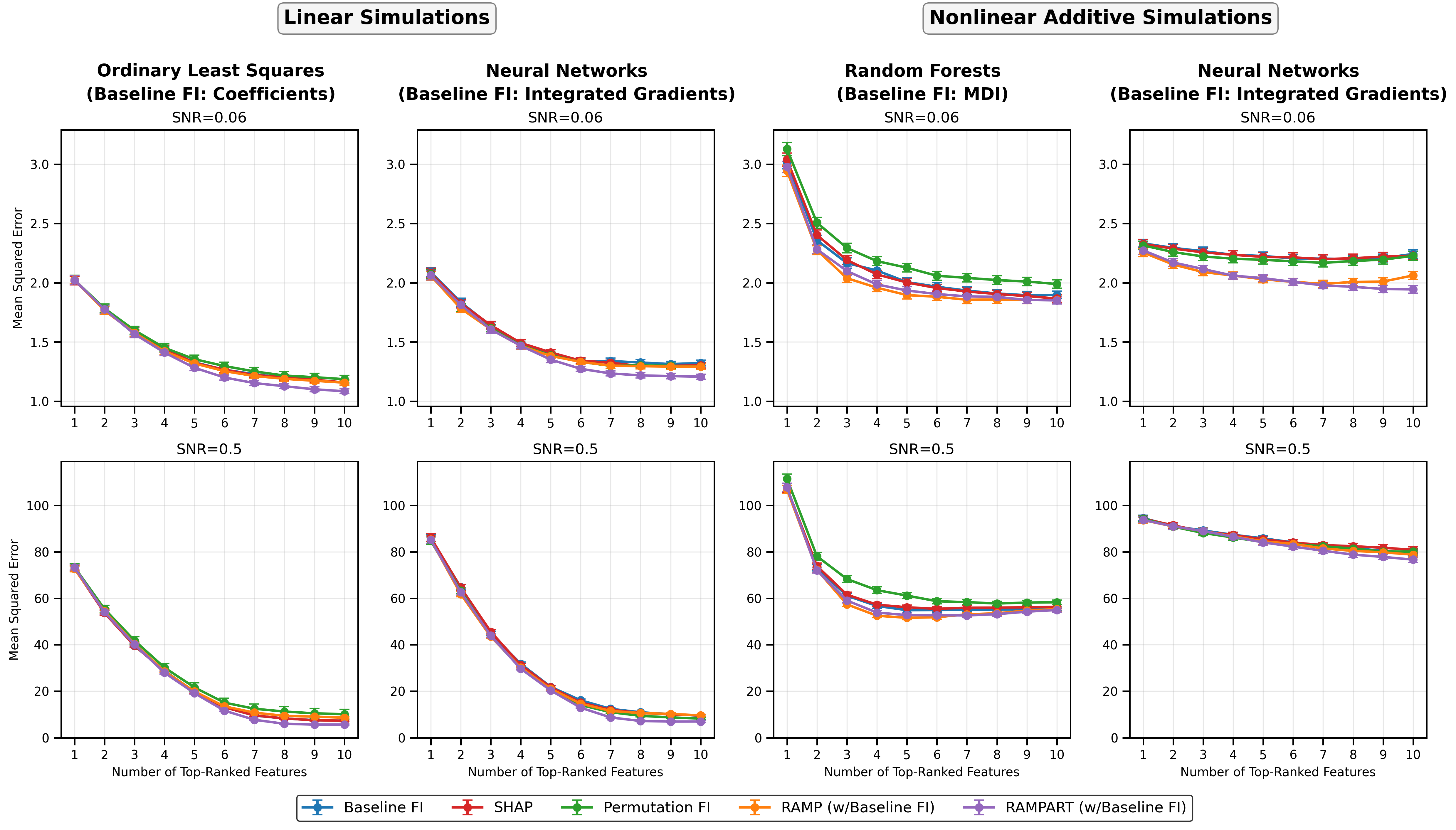}
\caption{Mean-Squared error vs. number of top-ranked features used as predictors in ablation simulation with identity covariance($M = 1000$)} \label{fig: ablation Reg_1000}
\end{center}
\end{figure}

\begin{figure}[hbt!]
\begin{center}
\includegraphics[width= \textwidth]{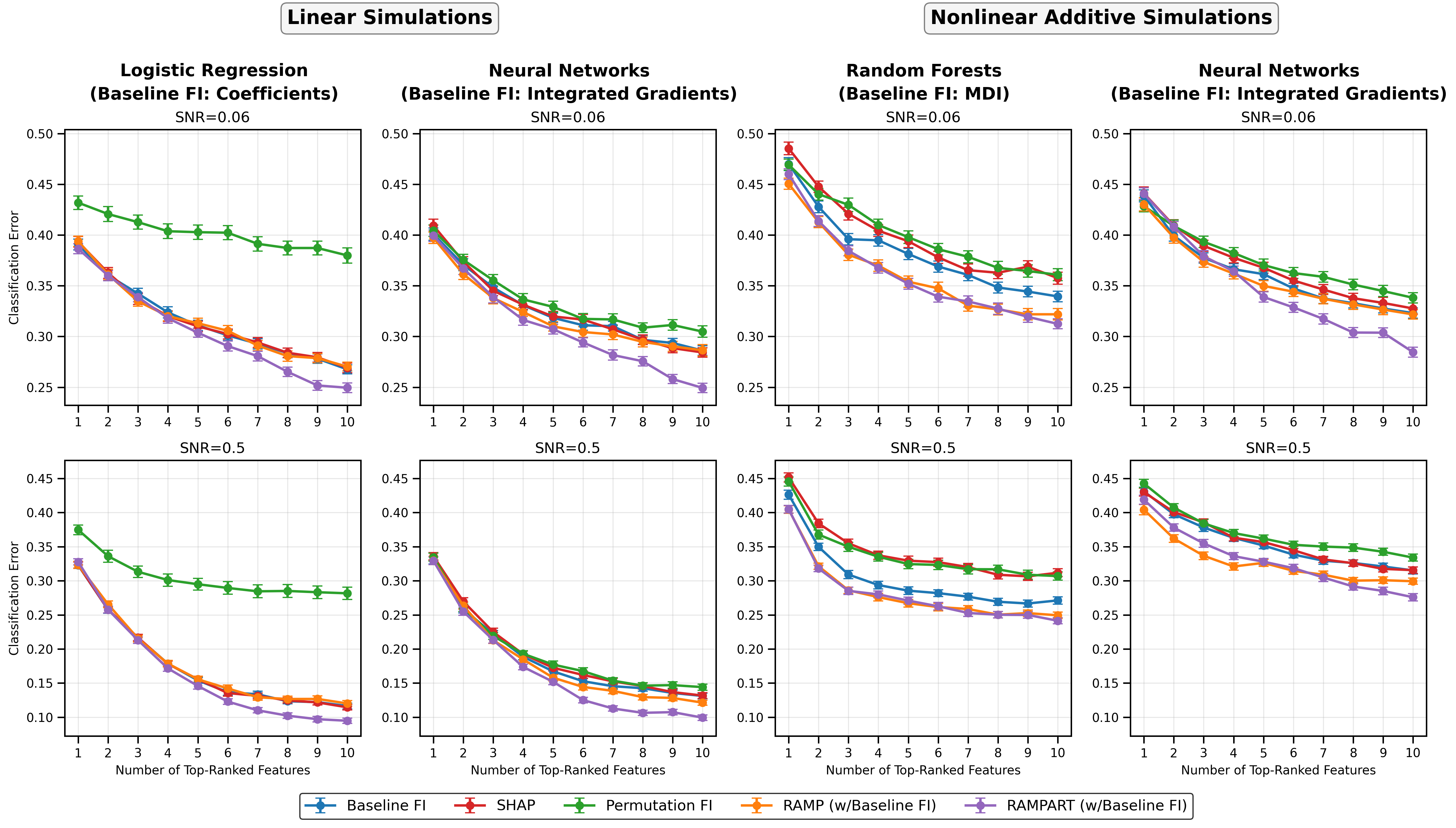}
\caption{Classification error vs. number of top-ranked features used as predictors in ablation simulation with identity covariance ($M = 2000$)} \label{fig: ablation Cla_2000}
\end{center}
\end{figure}

\begin{figure}[hbt!]
\begin{center}
\includegraphics[width= \textwidth]{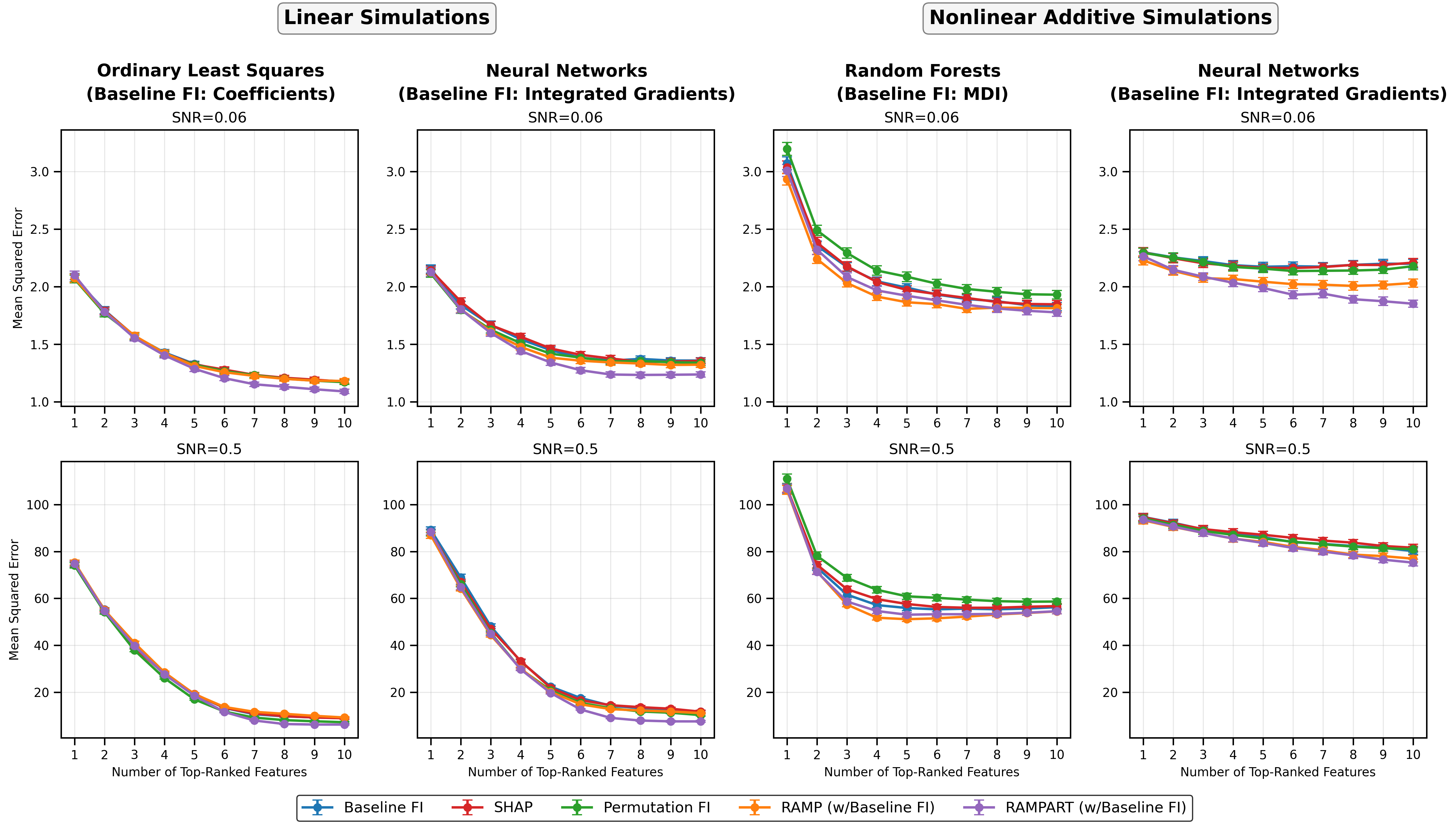}
\caption{Mean-Squared error vs. number of top-ranked features used as predictors in ablation simulation with identity covariance ($M = 2000$)} \label{fig: ablation Reg_2000}
\end{center}
\end{figure}

\begin{figure}[hbt!]
\begin{center}
\includegraphics[width= \textwidth]{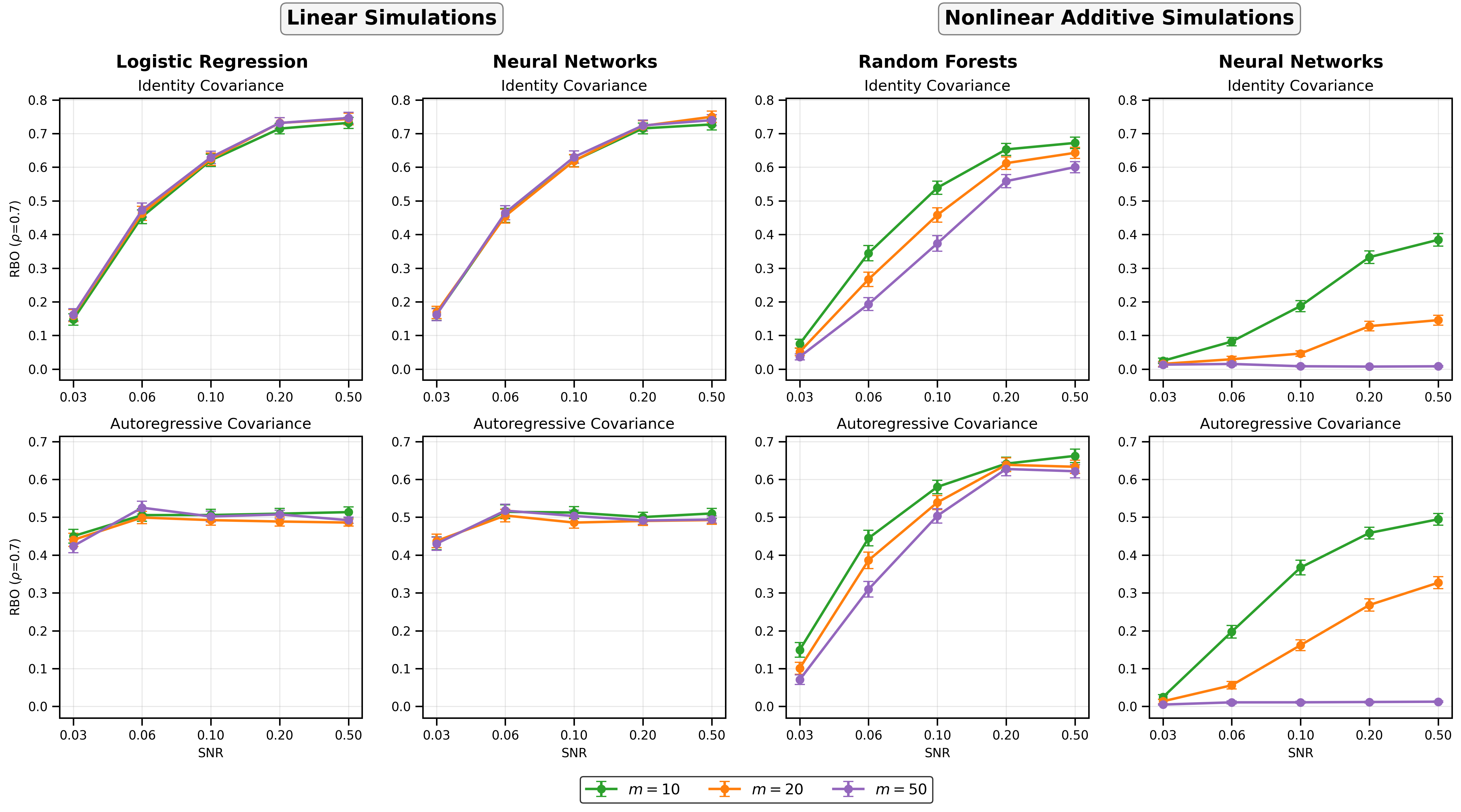}
\caption{Impact (RBO with $\rho = 0.7$) of minipatch size for classification ($M = 500$)} \label{fig: MP_Cla}
\end{center}
\end{figure}

\begin{figure}[hbt!]
\begin{center}
\includegraphics[width= \textwidth]{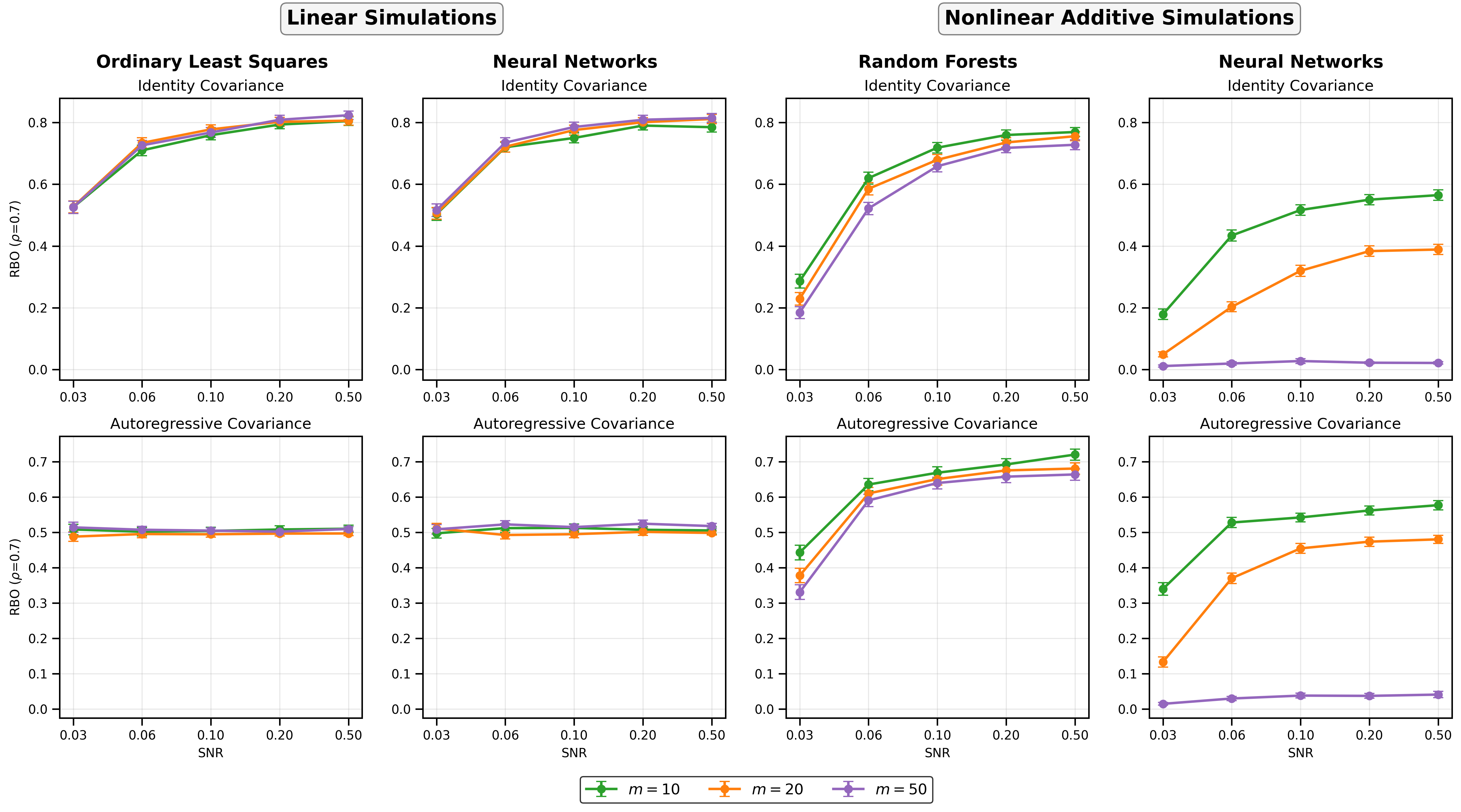}
\caption{Impact (RBO with $\rho = 0.7$) of minipatch size for regression ($M = 500$)} \label{fig: MP_Reg}
\end{center}
\end{figure}

\FloatBarrier

\section{Additional Case Study Discussion}
\label{appendix: CCLE}

\paragraph{Response Variable.} To quantify the PD-0325901 drug response in our case study, we use the area under the dose-response curve (AUC) as the primary outcome of interest. The AUC is a widely-used measure of overall drug sensitivity, defined as the area between the dose-response curve and $0$ \citep{barretina2012cancer}. A higher value indicates that the drug was more effective at killing the cancer cells. We refer to \citet{barretina2012cancer} for details on how this dose-response data was collected and processed.

\paragraph{Data Preprocessing of Gene Expression Data.} The raw CCLE data used in this case study can be downloaded from the DepMap Portal (\url{https://depmap.org/portal/download/}) (version 18Q3). Due to the heavy right-skewed distribution of the RNASeq gene expression values, we log-transformed ($log(x + 1)$) the raw gene expression data. We also restricted our analysis to the 1200 genes with the highest variance across the cell lines and sequentially removed genes which had $> 0.95$ Pearson correlation with another gene in the dataset, resulting in 1104 genes. The processed gene expression data was finally standardized to have zero mean and unit variance.

\begin{table}[h!]
\small
\begin{center}
\renewcommand{\arraystretch}{1} 
\begin{tabular}{cccccccc}
\toprule
\textbf{Gene} & \textbf{RAMPART} & \textbf{RAMP} & \textbf{Baseline} & \textbf{SHAP} & \textbf{Permutation} \\
\midrule
TOR4A & 1 & - & 4 & 2 & 1 \\
\cellcolor{gray!10}{ETV4} & \cellcolor{gray!10}{2} & \cellcolor{gray!10}{2} & \cellcolor{gray!10}{5} & \cellcolor{gray!10}{1} & \cellcolor{gray!10}{9} \\
SPRY2 & 3 & 1 & 7 & 5 & 2 \\
\cellcolor{gray!10}{GJB1} & \cellcolor{gray!10}{4} & \cellcolor{gray!10}{5} & \cellcolor{gray!10}{1} & \cellcolor{gray!10}{3} & \cellcolor{gray!10}{7} & \\
PYCARD & 5 & - & - & - & - \\
\cellcolor{gray!10}{WNT5A} & \cellcolor{gray!10}{6} & \cellcolor{gray!10}{-} & \cellcolor{gray!10}{-} & \cellcolor{gray!10}{-} & \cellcolor{gray!10}{-} \\
FERMT1 & 7 & - & - & - & 5 \\
\cellcolor{gray!10}{NRROS} & \cellcolor{gray!10}{8} & \cellcolor{gray!10}{-} & \cellcolor{gray!10}{-} & \cellcolor{gray!10}{-} & \cellcolor{gray!10}{-} \\
LYZ & 9 & 4 & 2 & 8 & 10 \\
\cellcolor{gray!10}{NPAS2} & \cellcolor{gray!10}{10} & \cellcolor{gray!10}{3} & \cellcolor{gray!10}{6} & \cellcolor{gray!10}{7} & \cellcolor{gray!10}{6} \\
RP11-290L1.3 & - & 6 & 9 & 10 & - \\
\cellcolor{gray!10}{ITGA6} & \cellcolor{gray!10}{-} & \cellcolor{gray!10}{7} & \cellcolor{gray!10}{10} & \cellcolor{gray!10}{6} & \cellcolor{gray!10}{3} \\
ID3 & - & 8 & 3 & 4 & - \\
\cellcolor{gray!10}{DUSP6} & \cellcolor{gray!10}{-} & \cellcolor{gray!10}{9} & \cellcolor{gray!10}{8} & \cellcolor{gray!10}{9} & \cellcolor{gray!10}{-} \\
TNFRSF14 & - & 10 & - & - & - \\
\cellcolor{gray!10}{TMEM184A} & \cellcolor{gray!10}{-} & \cellcolor{gray!10}{-} & \cellcolor{gray!10}{-} & \cellcolor{gray!10}{-} & \cellcolor{gray!10}{4} \\
RP11-284F21.10 & - & - & - & - & 8 \\
\bottomrule
\end{tabular}
\caption{Top-10 ranked genes according to each feature importance ranking method for predicting the PD-0325901 drug response.}
\label{table: top 10 genes}
\end{center}
\end{table}

\end{document}